\documentclass[preprint]{article}
\usepackage{neurips_2026}

\usepackage[utf8]{inputenc} 
\usepackage[T1]{fontenc}
\usepackage{graphicx}
\usepackage{hyperref}
\usepackage{url}
\usepackage{booktabs}
\usepackage{amsfonts,amsmath,amsthm}
\usepackage{nicefrac}
\usepackage{caption}
\usepackage{xcolor}
\usepackage{tikz}
\usepackage{adjustbox}
\usepackage{float}
\usepackage[ruled,vlined]{algorithm2e}

\newtheorem{proposition}{Proposition}

\newtheorem{definition}{Definition}

\title{OPTIMUS-Prime: Minimal \& Sufficient Concept Explanations for Deep Vision Models}

\author{%
  Arthur Hoarau\\
  Université de Lorraine, CentraleSupélec\\
  Loria, CNRS,  Metz, France\\
  \texttt{arthur.hoarau@centralesupelec.fr}\\
  \And
  Chenrui Zhu\\
  Universit\'e de technologie de Compi\`egne\\
  UMR CNRS 7253 Heudiasyc, France\\
  \texttt{chenrui.zhu@utc.fr}\\
  \And
  Vu‑Linh Nguyen\\
  Universit\'e de technologie de Compi\`egne\\
  UMR CNRS 7253 Heudiasyc, France\\
  \texttt{vu-linh.nguyen@utc.fr}\\
}

\begin{document}

\maketitle

\begin{abstract}
The growing demand for transparency in automated decision-making has propelled eXplainable Artificial Intelligence (XAI) to the forefront of machine learning research. In computer vision, however, existing explanation methods often prioritize end-user accessibility at the expense of formal guarantees, leaving a critical gap between practical utility and theoretical rigor.
In this paper, we address this gap by introducing OPTIMUS, a novel framework for generating concept-based visual explanations for deep classification models. OPTIMUS explanations take the form of visual heatmaps that not only remain interpretable to end users, but are grounded in the well-established theory of prime implicants, providing formal guarantees that have been largely absent from existing saliency-based methods.
Specifically, OPTIMUS explanations satisfy two desirable properties: sufficiency, ensuring that the highlighted concepts provably guarantee the classifier's prediction, and minimality, ensuring that no strict subset of those concepts retains this guarantee. Together, these properties yield explanations that are both logically tight and visually coherent.
We validate our approach on a visual classification benchmark, demonstrating that OPTIMUS heatmaps naturally and faithfully surface the decision-relevant concepts underlying model predictions.
\end{abstract}

\section{Introduction}
\begingroup
\renewcommand{\thefootnote}{}
\footnotetext{Link to the code: \url{https://anonymous.4open.science/r/optimus-prime-075D}}
\endgroup

The increasing complexity of modern machine learning architectures has created a fundamental tension between predictive performance and interpretability, driving significant interest in explainability methods. A central challenge in deep vision is finding the right balance between explanations that are theoretically grounded from a learning perspective and those that remain accessible to end users.

A common approach to model interpretability relies on saliency maps, which highlight the input regions most influential to a model's prediction. Several strategies have been proposed in the literature. A first consists in perturbing the inputs and observing the resulting changes in deeper neuron activations~\cite{Zeiler2014, zintgraf2017}. A second strategy, exemplified by Grad-CAM and Guided Grad-CAM~\cite{selvaraju2020}, is specific to convolutional neural networks (CNNs) and leverages the gradient of a target class score, back-propagated to the last convolutional layer, to produce a coarse localization map. In this paper, we focus on a third strategy, in which the gradient of the predicted class score is back-propagated to the input space in a single forward-backward pass~\cite{simonyan2014}. Intuitively, a high saliency value at pixel $i$ indicates that a small perturbation at that location would significantly alter the model's output, suggesting that the model relies on that region to reach its decision.
Such gradient-based methods, however, lack formal guarantees and can exhibit sensitivity to input noise and model parameter choices. To address some of these limitations, several extensions have been proposed, including Integrated Gradients~\cite{sundararajan2017}, which accumulates gradients along a path from a baseline to the input, and DeepLIFT~\cite{shrikumar2017}, which attributes contributions by comparing neuron activations to a reference. Both methods will be considered in this article. Alternatively, concept-based explanation methods such as TCAV~\cite{kim2017} probe the network's internal representations by testing the sensitivity of predictions to user-defined concept directions, offering a higher-level abstraction than pixel-wise saliency.

These techniques belong to a broader family of methods aimed at enhancing the interpretability of machine learning models, collectively known as eXplainable Artificial Intelligence (XAI)~\cite{das2020opportunities}. Saliency heatmaps are \emph{post-hoc}, applied after and independently from the training procedure, and partially \emph{model-agnostic}, in the sense that they do not assume a specific internal architecture. The latter qualification is only partially accurate: several backpropagation-based techniques require specific architectures or become intractable for others, yielding \emph{model-specific} explanations, as opposed to purely \emph{model-agnostic} approaches such as SHAP~\cite{Lundberg17} or LIME~\cite{Lime16}.

In a general classification setting, deep neural networks act as hierarchical feature extractors~\cite{li2015}, building increasingly abstract representations of the input space to facilitate decision-making at deeper layers. In this work, we leverage the notion of \emph{concepts} as the semantic information captured at a given layer~\cite{bau_network_2017, bau2020, karpathy2015}. In the first layer, concepts correspond to raw pixels while in the final layer, they correspond to class labels (e.g., ``dog'' or ``cat'' in an animal classification task). Intermediate layers capture concepts of varying abstraction~\cite{mikolov2013}, potentially encoding semantically meaningful features such as ``pointy ears'' or ``vertical pupils'', though these representations can be highly entangled, subject to superposition~\cite{elhage2022,cheung2019}, and remain difficult to define precisely.

In the formal XAI literature, \emph{prime implicants} provide sufficient and subset-minimal explanations for Boolean classifiers~\cite{JoaoLogicXAI,Ignatiev2020FromCT,ShihCD18}: fixing the variables in the explanation is enough to guarantee the model's prediction, and no proper subset of these variables preserves this guarantee. Transposed to our setting, variables become concepts, and the Boolean function becomes the classifier's decision rule. An OPTIMUS prime implicant explanation thus identifies the subset-minimal set of concepts that provably guarantees the prediction. This explanation can then be backpropagated into the input space, yielding representations that are both human-interpretable and endowed with formal sufficiency and minimality guarantees.

A closely related contribution was recently published~\cite{soria_formal_2026}, in which the authors propose sufficient and minimal explanations for deep vision. Their approach, however, is restricted to prototype-based network architectures~\cite{davoodi_2023}, making it strongly \emph{model-specific}. As a consequence, their prime implicant formulation is inherently tied to the prototype structure and cannot be directly applied to general deep networks. We relax this constraint and extend such explanations to a broader class of deep neural networks, moving toward more \emph{model-agnostic} explanations. Furthermore, their method does not backpropagate explanations into the input space and omits end-user visualizations, relying solely on numerical metrics, limiting the intuitive accessibility of the results. Nevertheless, their work introduces the compelling notion of \emph{abductive latent explanations}, arguing that ``the pixel-level is not the correct level of abstraction for the human final user of the explanation''. We strongly subscribe to this view, and build upon it throughout this paper.

The remainder of this paper is structured as follows. Section~\ref{sec:optimus} details the proposed OPTIMUS framework. Section~\ref{sec:concepts} formalizes the notion of concepts. Section~\ref{sec:prime} extends prime implicants to neural network concepts and introduces a direct search algorithm, while Section~\ref{sec:heatmaps} addresses their backpropagation to the input space. Section~\ref{sec:experiments} validates the framework on a visual Cat-Dog-Bird classification benchmark, demonstrating the effectiveness of OPTIMUS explanations in practice. Section~\ref{sec:discussion} discusses limitations and open challenges, and Section~\ref{sec:conclusion} concludes the article.

\section{OPTIMUS-Prime}\label{sec:optimus}

The OPTIMUS-Prime pipeline is illustrated in Figure~\ref{fig:OPTIMUS}, which operates in two stages. First, given a prediction from a deep vision model, a linear analysis is performed in the latent concept space of an intermediate layer. Leveraging the linearity of the final classification layer, this analysis identifies a prime implicant (PI): a minimal and sufficient subset of concepts, that provably guarantees the predicted class over all rivals. Second, the selected concepts are projected back into the input space via gradient-based attribution methods, namely Integrated Gradients or DeepLIFT, which attribute each pixel's relevance with respect to the selected concepts only. The result is an OPTIMUS heatmap in which highlighted regions directly correspond to the minimal and sufficient concepts underlying the model's decision. This is what gives OPTIMUS its edge over saliency maps: rather than measuring influence, it identify logical sufficiency with a minimality guarantee.

\begin{figure}
    \centering
    \includegraphics[width=\linewidth]{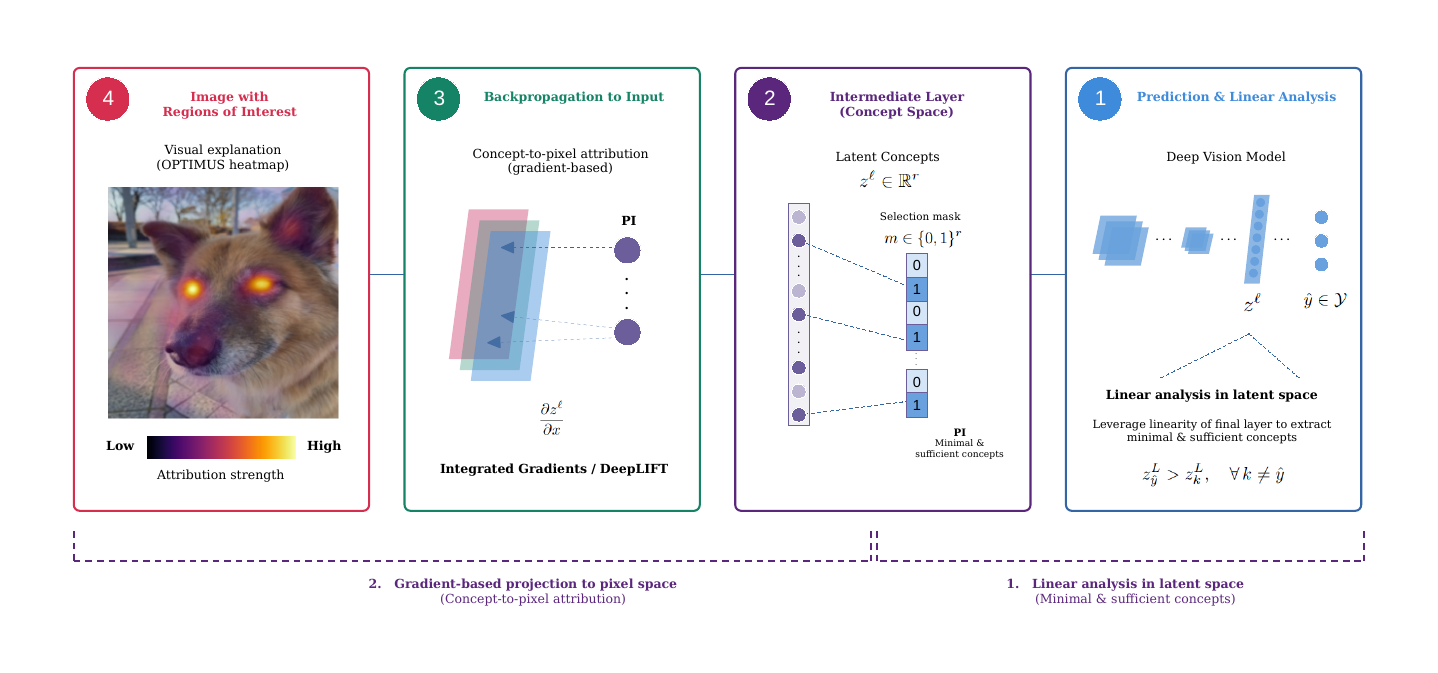}
    \caption{OPTIMUS-Prime: A linear analysis in latent space identifies a prime implicant (PI), a subset of concepts encoded by a selection mask (1-2). These concepts are then backpropagated to the input space via Integrated Gradients or DeepLIFT (3), yielding a visual heatmap where highlighted regions directly correspond to the concepts driving the model's prediction (4).}
    \label{fig:OPTIMUS}
\end{figure}

The following sections formally detail each component of the framework. Section~\ref{sec:concepts} introduces the notion of concepts as the semantic information captured at a given layer. Section~\ref{sec:prime} then defines prime implicants in the context of neural network classifiers and proposes an efficient direct search algorithm. Finally, Section~\ref{sec:heatmaps} addresses the backpropagation of the selected concepts into the input space, bridging the gap between latent guarantees and human-interpretable visualizations.

\subsection{OPTIMUS: Optimal concept explanations}\label{sec:concepts}

We consider a multiclass classification setup with input space \(\mathcal{X} \subseteq \mathbb{R}^P\), where \(P\) denotes the input dimension, and the label space is $\mathcal{Y} := \{1,\dots,K\}$. We denote by \((X,Y)\) a pair of random variables and $(x_n, y_n)$ i.i.d draws from the unknown joint distribution $P_{X,Y}$ forming a dataset $\mathcal{D} := \{(x_n, y_n)\}_{n=1}^N$ of size~$N$. A neural network classifier is a measurable function $h : \mathcal{X} \to \mathcal{Y}$,
which is implemented as a composition of $L$ layers.
For any given input \(x \in \mathcal{X}\), we denote $\hat{y}(x) := h(x) \in \mathcal{Y}$ the predicted class.
We denote by $z^{\ell}(x) \in \mathcal{Z}^{\ell} \subseteq \mathbb{R}^{r^\ell}$ the output of the $\ell$-th hidden layer, for $\ell \in \{1, \dots, L-1\}$. 
We assume that the network can be decomposed as
\begin{equation}
  h(x) = \arg\max_{k \in \{1,\dots,K\}} z^{L}_k(x),    
\end{equation}
where $z^{L} \in \mathcal{Z}^{L} \subseteq \mathbb{R}^{K}$ is the output of the final linear layer, producing one logit $z^{L}_k$ per class.

\begin{definition}[Concept]
    A concept $\circ^\ell \in \{1,..., r^\ell\}$ reflects the semantic information captured in order to activate the $\circ^\ell$-th neuron $z^\ell_{\circ^\ell}$ in layer $\ell$.
\end{definition}
The concepts encoded at the input layer $z^0(x) = x$ correspond to raw pixels, while those at the final layer $z^L$ correspond to class-level representations (e.g., the concept ``cat'' in an animal classification task). One might expect that concepts become increasingly abstract with depth. In practice, however, intermediate-layer concepts rarely align naturally with human-interpretable features~\cite{cheung2019}, such as ``pointy ears'' or ``vertical pupils'', even in tasks where such features are semantically meaningful. Multiple definitions have been proposed to characterize such concepts~\cite{elhage2022, soria_formal_2026}, and we deliberately adopt a minimal one, free of 
any human-interpretability requirements, so as to preserve the generality of our framework. A related line of work constrains the network architecture itself to produce human-interpretable concept predictions as intermediate outputs~\cite{koh_concept_2020}, at the cost of requiring concept-level supervision at training time, a constraint our framework deliberately avoids.

We introduce OPTIMUS, a novel explanation framework that provides users with a heatmap grounded in a minimal and sufficient set of concepts to preserve the model prediction.
\begin{definition}[OPTIMUS explanation]
    A matrix $\mathbf{m} \in \mathbb{R}^{H \times W}$ where each entry $\mathbf{m}_{i,j}$ 
    reflects the relevance of pixel $(i,j)$ w.r.t.\ a set of concepts $\mathcal{O}^{\ell} \subseteq \{1,\dots,r^\ell\}$ in a latent space $\ell \in \{1,\dots,L\}$, is an OPTIMUS explanation if and only if $\mathcal{O}^{\ell}$ is a prime implicant at layer $\ell$.
\end{definition}
This definition leverages the notion of prime implicants, formally introduced in the next section, to ensure both the minimality and sufficiency of the highlighted concepts w.r.t. the model prediction. An OPTIMUS is then a human-interpretable projection of minimal latent concepts in the input space.

\subsection{Multiclass prime implicants}\label{sec:prime}

We now introduce the notion of prime implicants. Informally, a PI is a subset-minimal set of variables sufficient to guarantee the prediction of the model, regardless of the values taken by the remaining variables.

Let $f:\mathcal Z\to\mathbb R^K$ be a linear multiclass classifier function such that 
\begin{equation}
    f_k(x)=\sum_{q=1}^r x_q w_q^{(k)}+b^{(k)},
    \qquad k\in\{1,\dots,K\},
\end{equation}
defined on any multi-dimensional box (in this article, we consider the unit hypercube).
Let consider an instance $x^*\in [0,1]^r$ such that class $c$ is predicted 
\begin{equation}
    f_c(x^*)> f_k(x^*),\qquad \forall k\in\{1,\dots,K\} \setminus c.
\end{equation}
For each rival class $k \in \{1, ..., K\} \setminus c$, let respectively define the dominance weights and biases
\begin{equation}
    \Delta_q^{(k)}:=w_q^{(c)}-w_q^{(k)},\qquad\delta^{(k)}:=b^{(c)}-b^{(k)}.
\end{equation}
The dominance margin between class $c$ and class $k$ is defined as
\begin{equation}
    f_c(x)-f_k(x) =\delta^{(k)}+\sum_{q=1}^r x_q\Delta_q^{(k)}.
\end{equation}
Let $I = \{1,\dots,r\}$ be the set of all feature indices. For a given 
input $x^* \in \mathcal{Z}$ and a subset $S \subseteq I$, we interpret 
$S$ as the set of fixed features, whose values are locked to 
$x^*_q$, while the remaining features $I \setminus S$ are free to vary 
in $[0,1]^r$. The worst-case margin for class $k$ under this partial freedom is defined as
\begin{equation}
    \Gamma_k(S):= \delta^{(k)}+\sum_{q\in S} x_q^*\Delta_q^{(k)}+\sum_{q\in I\setminus S}\min_{v\in[0,1]} v\,\Delta_q^{(k)}.
\end{equation}

\begin{definition}[Multiclass implicant]
A candidate $S\subseteq I$ is a multiclass implicant for class $c$ at $x^*$ iff $\forall x\in [0, 1]^r$ such that $\forall q\in S, x_q = x^*_q$
\begin{equation}
    f_c(x) > f_k(x),
\quad \forall k\in\{1, ..., K\} \setminus c.
\end{equation}
Equivalently, $S$ is a multiclass implicant iff
\begin{equation}
\Gamma_k(S)>0,
\qquad \forall k\in\{1, ..., K\} \setminus c.
\end{equation}
\end{definition}

\begin{definition}[Multiclass prime implicant]
A subset $S\subseteq I$ is a multiclass prime implicant (PI) for class $c$ at $x^*$ iff $S$ is an implicant~\footnote{A singleton implicant is considered prime.
} and
\begin{equation}
    \exists q\in S, \exists k\in\{1, ..., K\} \setminus c,\qquad \Gamma_k(S\setminus{\{q\}}) \leq 0.
\end{equation}
\end{definition}
A multiclass PI is a subset-minimal set of features ensuring dominance over all rival classes simultaneously. Recent extensions of formal explanations to more expressive classifiers~\cite{izza2020} further motivate the need for scalable PI search procedures in the multiclass setting.

\subsubsection{Prime implicant search}

Finding a prime implicant in the linear binary classifier has been studied by~\cite{marques2020explaining}. Their work also considers the search for a cardinality-minimum PI in the binary setting. The objective is to obtain an PI explanation as small as possible while still guaranteeing the prediction. This is particularly relevant in image and visual classification, where large explanations may be difficult to interpret. Inspired by this approach, we extend the search procedure to the multiclass setting. 

Given an instance $x^*$, we propose the following direct search by starting from the complete set of features $I$.
Let $c = \arg\max_k \;f_k(x^*)$ be the predicted class of the classifier.
For each rival class $k\neq c$ let define
\begin{equation}
a_k:=\delta^{(k)}+\sum_{q=1}^r \min_{v\in[0,1]} v\,\Delta_q^{(k)},
\qquad
g_q^{(k)}:=x_q^*\Delta_q^{(k)}-\min_{v\in[0,1]} v\,\Delta_q^{(k)},
\end{equation}
such that the worst-case margin can be rewritten as
\begin{equation}
\Gamma_k(S)=a_k+\sum_{q\in S} g_q^{(k)}.
\end{equation}
A feature $q\in S$ can then be removed if $\Gamma_k(S\setminus\{q\})>0, \forall k\neq c$, which can be simplified for computation as
\begin{equation}
a_k+\sum_{j\in S\setminus\{q\}} g_j^{(k)} >0,
\qquad \forall k\in\{1,\dots,K\}\setminus c.
\end{equation}

\begin{proposition}\label{prop:direct_search_pi}
Assume that $x^*$ is classified as class $c$. Starting from $S=I$, by repeatedly removing any feature $q\in S$ such that \(\forall k\neq c \quad \Gamma_k(S\setminus\{q\})>0\), the resulting set $S$ is a multiclass PI when no further removal can guarantee the constraint.
\end{proposition}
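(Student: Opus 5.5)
The plan is to prove two things in turn: the procedure keeps an implicant at every step (loop invariant), and at termination no single feature can be removed, which is exactly the primality condition. Termination is immediate, since each removal strictly decreases $|S|$ and $|I| = r$ is finite. So the process stops after at most $r$ removals.

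\emph{Invariant.} First I check that $S = I$ is an implicant. Since $x^* \in [0,1]^r$, fixing every feature to $x^*_q$ leaves only the point $x^*$ itself. Hence
\begin{equation*}
\Gamma_k(I) = \delta^{(k)} + \sum_{q=1}^r x^*_q \Delta_q^{(k)} = f_c(x^*) - f_k(x^*) > 0
\end{equation*}
for every $k \neq c$, using the assumption that $x^*$ is classified as $c$. Next, a removal of $q$ is performed only when $\Gamma_k(S \setminus \{q\}) > 0$ for all $k \neq c$. By the equivalent characterization in the definition of a multiclass implicant, the new set $S \setminus \{q\}$ is again an implicant. By induction on the number of removals, every intermediate set, and in particular the final set $S$, is a multiclass implicant.

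For completeness I would also verify the equivalence between the universal-quantifier definition and $\Gamma_k(S) > 0$, which the paper asserts. The margin $f_c(x) - f_k(x)$ is affine and separable in the coordinates. Its infimum over the box with the coordinates in $S$ fixed is therefore obtained coordinate-wise by $\min_{v \in [0,1]} v\,\Delta_q^{(k)}$, and this minimum is attained at $v \in \{0,1\}$. So positivity for all admissible $x$ holds iff $\Gamma_k(S) > 0$.

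\emph{Minimality at termination.} The procedure stops exactly when, for every $q \in S$, there exists $k \neq c$ with $\Gamma_k(S \setminus \{q\}) \le 0$. Equivalently, no $S \setminus \{q\}$ is an implicant. This is stronger than the existential condition written in the definition of a multiclass PI, so it implies it whenever $S \neq \emptyset$.

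I expect the main obstacle to be the edge cases in reconciling the termination condition with the definition as literally stated.
\begin{itemize}
\item If the loop ends at $S = \emptyset$ (possible when $a_k > 0$ for all $k$), the existential clause is vacuous. I would treat the empty implicant as trivially prime, since it has no proper subset, and note this in the spirit of the footnote on singletons.
\item I would remark that single-element minimality suffices for full subset-minimality. By the decomposition $\Gamma_k(S) = a_k + \sum_{q \in S} g_q^{(k)}$ with every $g_q^{(k)} \ge 0$ (because $x^*_q \Delta_q^{(k)} \ge \min_{v \in [0,1]} v\,\Delta_q^{(k)}$), $\Gamma_k$ is monotone nondecreasing in $S$. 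So if some proper subset $T \subsetneq S$ were an implicant, then any $S \setminus \{q\} \supseteq T$ would be an implicant too, contradicting termination.
\end{itemize}
This monotonicity argument is the key step for arguing that the resulting $S$ is genuinely a subset-minimal PI.
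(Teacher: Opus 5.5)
Your proposal is correct and follows the same route as the paper's proof. $I$ is an implicant because $\Gamma_k(I)=f_c(x^*)-f_k(x^*)>0$, removals preserve the implicant property, and the stopping condition gives, for every $q\in S$, a rival $k$ with $\Gamma_k(S\setminus\{q\})\le 0$. Your monotonicity remark (via $g_q^{(k)}\ge 0$) makes explicit the step the paper leaves implicit when it passes from ``no single deletion preserves the implicant property'' to full subset-minimality, and your handling of $S=\emptyset$ and of the literal $\exists q$ in the definition are sensible additions.
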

Note, however, that the final PI depends on the order in which removable features are examined. Proof for Proposition~\ref{prop:direct_search_pi} is provided in Appendix~\ref{appendix:direct_search_pi}. The quantities \(a_k\) and \(g_q^{(k)}\) can be precomputed for all \(q \in I\) and all rival classes \(k \in \{1, \cdots, K \} \setminus c\) in \(O(Kr)\) time. Testing whether a feature \(q\in S\) can be removed costs \(O(K)\). Hence, pruning features in a fixed order costs \(O(Kr)\). 

In the binary case, variables are ranked once and for all according to their margin contribution, and then greedily selected in decreasing order of contribution, or, similar to Proposition \ref{prop:direct_search_pi}, removed in ascending order of contribution. In the multiclass case, however, each variable contributes differently to the various pairwise margin, therefore no single static ordering is available. Hence, the search for a cardinality-minimum multiclass PI can be formulated as a combinatorial optimization problem. Let introduce binary variables $s_q\in\{0,1\}$, where $s_q=1$ feature $q$ is selected. The problem becomes
\begin{equation}
\begin{aligned}
    & \min_{s\in\{0,1\}^r} \sum_{q=1}^r s_q, \\
    \text{subject to } \quad & a_k+\sum_{q=1}^r s_q g_q^{(k)} > 0,
    \qquad \forall k\neq c.
\end{aligned}
\end{equation}

Any optimal solution is a multiclass implicant of minimum cardinality. 

Note that an exact solution to find a minimum-cardinality multiclass PI is possible via mixed-integer linear programming by replacing strict inequality 0 by some fixed tolerance \(\varepsilon > 0\). Nevertheless, the problem remains combinatorial in general and becomes impractical for large feature sets. We therefore propose some adaptive greedy rules. One can select among removable features. We thus define the \textsc{LowGain} as
\begin{equation}\label{eq:rule_lowgain}
   q^\star\in\arg\min_{q\in R(S)}\sum_{k\neq c} g_q^{(k)},
\end{equation}
which removes the features with the smallest total contribution to the margins. Elements \(g_q^{(k)}\) can be precomputed in \(O(Kr)\) and sorted once in \(O(r\log r)\). Hence, the total complexity is \(O(Kr+r\log r)\). We also define an alternative \textsc{MaxMin} rule as
\begin{equation}\label{eq:rule_maxmin}
    q^\star\in\arg\max_{q\in R(S)}\min_{k\neq c}\Gamma_k(S\setminus\{q\}),
\end{equation}
which removes the feature that keeps the remaining worst-case margin as large as possible. Recomputing such a greedy choice by scanning all candidates at each step yields a worst-case complexity of \(O(Kr^2)\). The algorithm for finding a multiclass PI is provided in Appendix~\ref{alg:find_multiclass_pi}.

\subsubsection{OPTIMUS Prime implicant}

This section outlines the assumptions and requirements under which the proposed framework maintains its theoretical guarantees.

An OPTIMUS prime implicant is defined at a layer $\ell$ that is imperatively followed by a linear function. This is naturally satisfied at layer $L-1$, which is directly followed by the final linear layer, an architecture adopted by most neural networks. The second requirement is more restrictive: the relative input space must be bounded. This can be enforced by applying a sigmoid activation function at the penultimate layer, constraining the representations to the hypercube $[0,1]^{r^{L-1}}$. However, such a restriction may be too costly in practice, as modern activation functions such as ReLU are positively unbounded. One possible remedy is to replace the sigmoid by an unbounded activation and instead sample the instance space to derive statistical bounds, at the cost of relaxing our exact theoretical guarantees to weaker PAC statistical guarantees.

An OPTIMUS prime implicant $\mathcal{O}^{\ell}$ provides the following guarantee. For an observation $x^*$, $\forall z^{\ell}(x) \in [0,1]^{r^\ell}$ such that $z^{\ell}_{\circ}(x) = z^{\ell}_\circ(x^*)$ for all $ \circ\in\mathcal{O}^{\ell}$
\begin{equation}
    z^L_{\hat{y}(x^*)}(x) > z^L_k(x), \quad \forall\, k \neq \hat{y}(x^*),
\end{equation}
and $\mathcal{O}^{\ell}$ is minimal with respect to this property.

In other words, the concepts $\circ \in \mathcal{O}^{\ell}$ are sufficient to guarantee the prediction $\hat{y}(x^*)$, and minimal in the sense that removing any single concept from $\mathcal{O}^{\ell}$ may alter 
the prediction.

\subsection{Visual gradient backpropagation}\label{sec:heatmaps}

In this section, we focus on the backpropagation of a specific concept's activation into the input space. For this purpose, we use Integrated Gradients~\cite{sundararajan2017} and DeepLIFT~\cite{shrikumar2017}.

\subsubsection{Integrated Gradients}

For this method, we need to compare our input $x$ to a baseline $x'$. In our image setup, the baseline $x'$ simply is a uniformly black image.
Le $\gamma(\sigma)$ be a linear path between $x'$ and $x$ such that 
\begin{equation}
    \gamma(\sigma) = x' + \sigma\times(x-x'), \quad \sigma\in [0,1].
\end{equation}
Given a function $F : \mathcal{X} \rightarrow[0,1]$, Integrated Gradients $\mathrm{IG}$ along the ``$i$-th'' dimension are defined as the path integral of the gradient along the linear path from the baseline to the target
\begin{equation}
    \mathrm{IG}_i(x) =(x_i - x_i')\times\int_{\sigma=0}^1\frac{\partial F(\gamma(\sigma))}{\partial x_i}d\sigma.
\end{equation}
Intuitively, rather than relying on the gradient at a single point, which may be uninformative or saturated, $\mathrm{IG}$ accumulates gradient information along the entire path from the baseline to the input, yielding a more robust and complete picture of each feature's contribution. To empirically obtain the $\mathrm{IG}$, an approximation can be computed on points occurring at sufficiently small intervals
\begin{equation}
    \widehat{\mathrm{IG}}_i(x) =(x_i - x_i')\times\frac{1}{M}\sum_{m=1}^M\frac{\partial F(\gamma(\frac{m}{M}))}{\partial x_i},
\end{equation}
where $m\in[50, 300]$. In practice, $m$ interpolation steps are sufficient to obtain a reliable approximation, with $[50, 300]$ being the standard range recommended in the original work.
Given the linearity of our last layer $L$, the IG of a subset of concepts $\mathcal{O}^{L-1}$ at layer $L-1$ can be expressed as
\begin{equation}
    \mathrm{IG}^{\mathrm{OPTIMUS}}_i(x) =(x_i - x_i')\times\frac{1}{M}\sum_{m=1}^M\sum_{k\in\mathcal{O}^{L-1}}w^L_{\hat{y},k}\times\frac{\partial z^{L-1}_k(\gamma(\frac{m}{M}))}{\partial x_i},
\end{equation}
where $w^L_{\hat{y},k}$ represent the weights of the last linear layer, predicting class $\hat{y}$. Restricting the sum to concepts $k\in\mathcal{O}^{L-1}$ rather than the full set of neurons effectively filters out irrelevant activations, ensuring that only the minimal and sufficient concepts contribute to the final heatmap.

\begin{figure}
    \centering
    \includegraphics[width=0.18\linewidth]{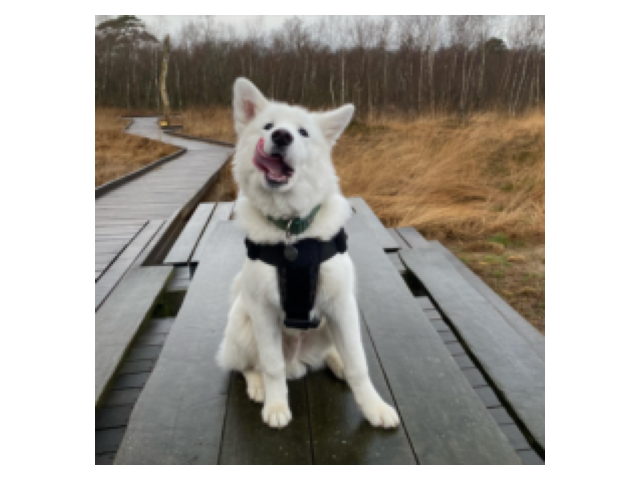}\hspace{-0.5cm}
    \includegraphics[width=0.18\linewidth]{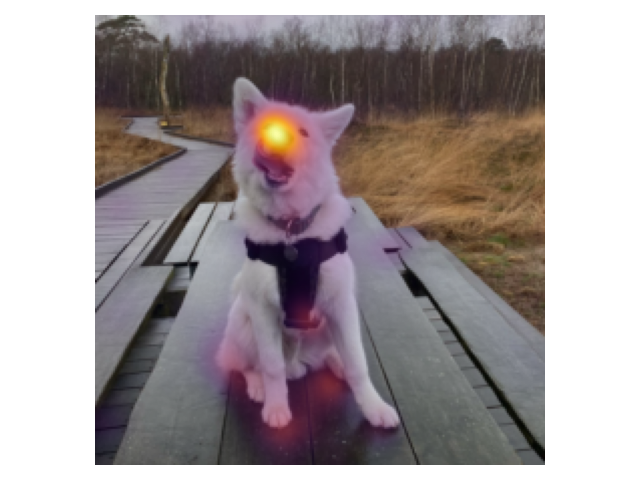}\hspace{-0.3cm}
    \includegraphics[width=0.18\linewidth]{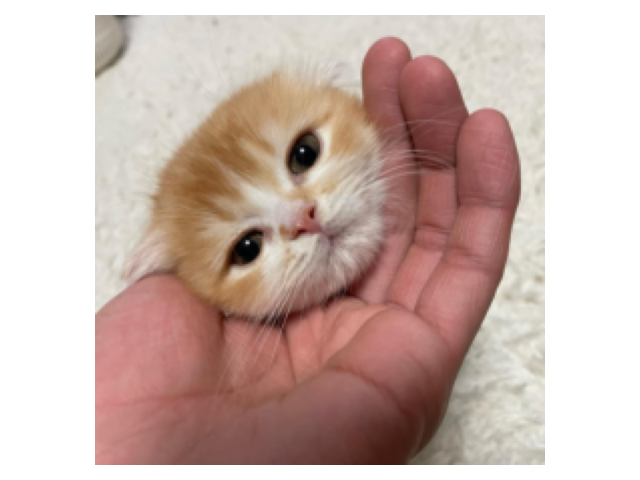}\hspace{-0.5cm}
    \includegraphics[width=0.18\linewidth]{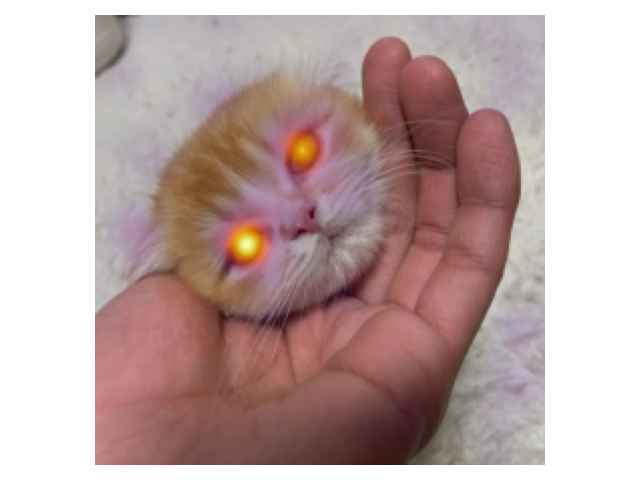}\hspace{-0.3cm}
    \includegraphics[width=0.18\linewidth]{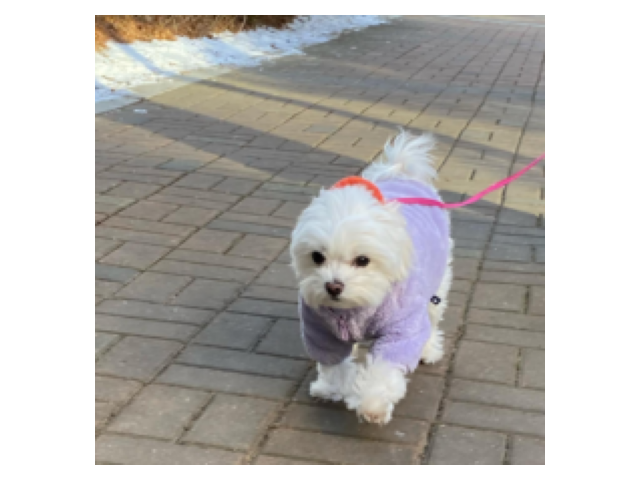}\hspace{-0.5cm}
    \includegraphics[width=0.18\linewidth]{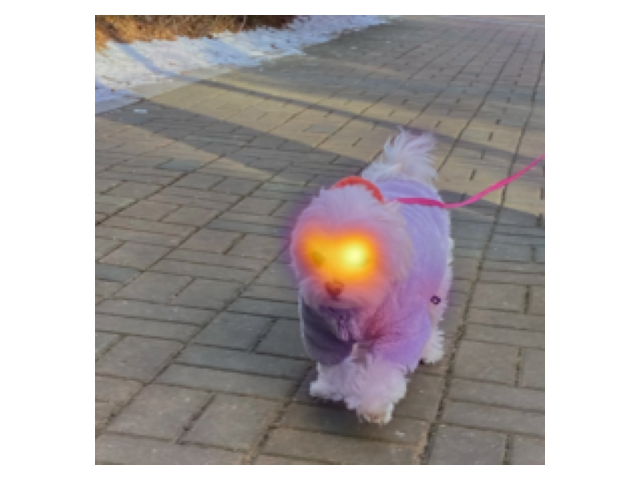}\\
    \includegraphics[width=0.18\linewidth]{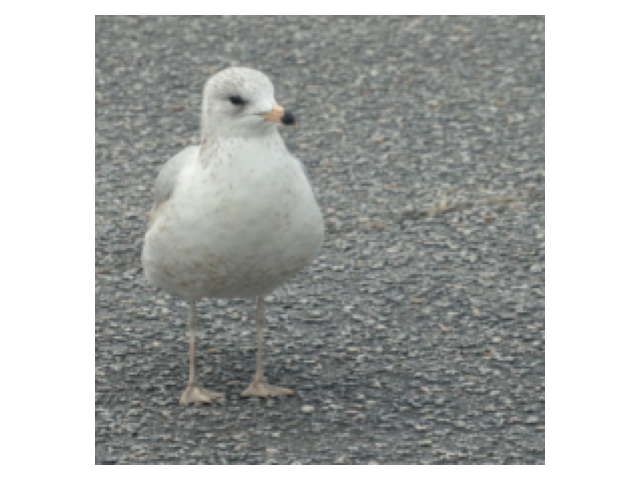}\hspace{-0.5cm}
    \includegraphics[width=0.18\linewidth]{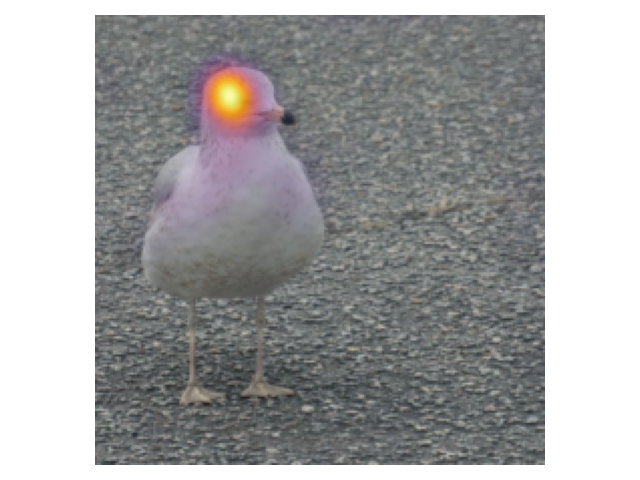}\hspace{-0.3cm}
    \includegraphics[width=0.18\linewidth]{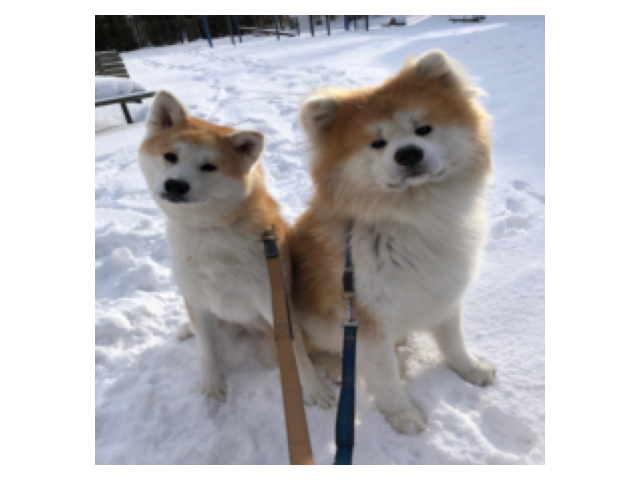}\hspace{-0.5cm}
    \includegraphics[width=0.18\linewidth]{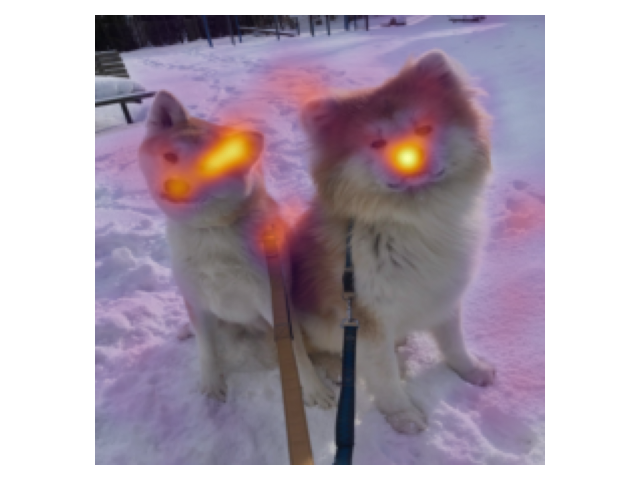}\hspace{-0.3cm}
    \includegraphics[width=0.18\linewidth]{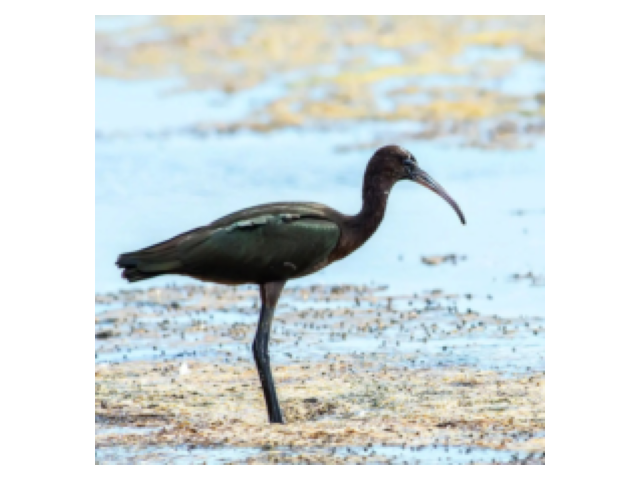}\hspace{-0.5cm}
    \includegraphics[width=0.18\linewidth]{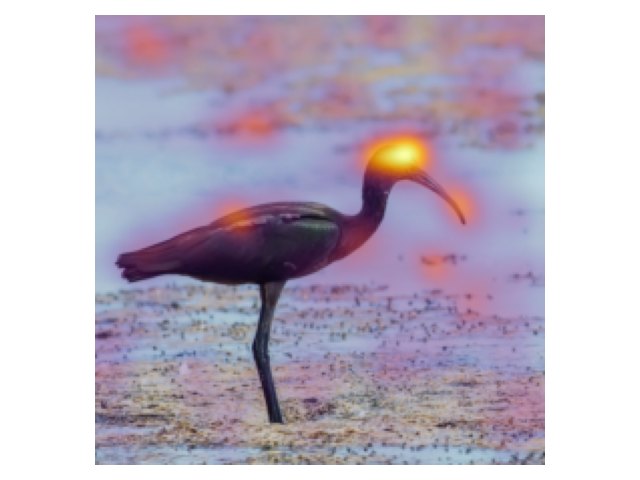}
    \caption{DeepLIFT-OPTIMUS explanations.}
    \label{fig:DL}
\end{figure}

\subsubsection{DeepLIFT}

While Integrated Gradients is computationally expensive due to its integration over $[0,1]$ and the repeated backpropagation of gradients, DeepLIFT proposes a single-point estimate instead. We first define $\Delta z^\ell_i = z_i^\ell(x) - z_i^\ell(x')$ as the activation difference with respect to the baseline $x'$ introduced earlier. A multiplier $m_{\Delta z^\ell_i\Delta z^{\ell + 1}_j}$ is then introduced as a finite-difference approximation of the partial derivative $\frac{\partial z_j^{\ell+1}}{\partial z_i^{\ell}}$ (when the baseline $x'$ is close to $x$), defined as follows
\begin{equation}
    m_{\Delta z^\ell_i\Delta z^{\ell + 1}_j} = \frac{\Delta z^{\ell+1}_j}{\Delta z^\ell_i}.
\end{equation}
A chain rule can then be applied to propagate the multipliers across the network
\begin{equation}
    m_{\Delta z^\ell_i\Delta z^{\ell + 2}_j} = \sum^{r^{\ell+1}}_t m_{\Delta z^\ell_i\Delta z^{\ell + 1}_t}\times m_{\Delta z^{\ell+1}_t\Delta z^{\ell + 2}_j}.
\end{equation}
The final attribution score is then obtained by backpropagating these multipliers through the entire network
\begin{equation}
    \mathrm{DeepLIFT}_i = (x_i - x_i')\times m_{\Delta z^0_i\Delta z^{L}_{\hat{y}}}.
\end{equation}
Once again, exploiting the linearity of our last layer $L$, the DeepLIFT scores restricted to a subset of concepts $\mathcal{O}^{L-1}$ at layer $L-1$ can be expressed as
\begin{equation}
    \mathrm{DeepLIFT}_i^{\mathrm{OPTIMUS}} = (x_i - x_i')\times\sum_{k\in\mathcal{O}^{L-1}}w^L_{\hat{y},k}\;m_{\Delta z^0_i\Delta z^{L-1}_{k}}.
\end{equation}
In this section, we showed that both DeepLIFT-OPTIMUS and IG-OPTIMUS can be expressed as a weighted sum over the previous layer's heatmap
\begin{equation}
    \mathrm{DeepLIFT}_i^{\mathrm{OPTIMUS}} = \sum_{k\in\mathcal{O}^{L-1}}w^L_{\hat{y},k}\times \mathrm{DeepLIFT}_k^{L-1},
\end{equation}
and
\begin{equation}
    \mathrm{IG}_i^{\mathrm{OPTIMUS}} = \sum_{k\in\mathcal{O}^{L-1}}w^L_{\hat{y},k}\times \widehat{\mathrm{IG}}_k^{L-1}.
\end{equation}
Here, the superscript indicates the target layer and the subscript identifies the target neuron with respect to which the attribution is computed. This formulation means that our implementation can directly leverage existing libraries to back-propagate concept-level attributions into the image space. The following section outlines the OPTIMUS-Prime framework applied to a practical visual classification task.

\section{Experiments}\label{sec:experiments}

For our experiments, we consider a Cat-Dog-Bird classification task~\cite{cdb-2026} with a $70/15/15$ train/val/test split. The encoder architecture is described below.
The following sections report results for DeepLIFT-OPTIMUS. An identical analysis for IG-OPTIMUS is provided in the Appendix.
Code is available at the link given on the first page of the article.

\subsection{Experimental setup}

The encoder maps $x \in \mathbb{R}^{3 \times 224 \times 224}$ to $z \in \mathbb{R}^{512 \times 14 \times 14}$ through four Conv-BN-ReLU-MaxPool blocks (channels: 32, 64, 128, 256), followed by two Conv-BN blocks (512 channels, stride 1) with a final Sigmoid activation. All convolutions use $3\times3$ kernels with padding 1.
During training, the backbone is followed by adaptive average pooling, a dropout layer ($p=0.3$), and two fully-connected layers (512$\to$256 with ReLU, then 256$\to$3) for 3-class classification.
At test time, the backbone is frozen and replaced by a single linear probe ($512 \to 3$) preceded by adaptive average pooling.
The model is optimized with cross-entropy loss and Adam ($\text{lr}=10^{-3}$), with a batch size of 64, trained for 20 epochs on the full network and 10 additional epochs for the linear probe.

\subsection{DeepLIFT-OPTIMUS}

Once the model is trained, a PI can be computed (cf.~Section~\ref{sec:prime}) on the final linear layer, whose inputs are the last backbone features bounded in $[0,1]$ by the Sigmoid activation. As shown in Section~\ref{sec:heatmaps}, DeepLIFT-OPTIMUS can be expressed as a weighted sum of DeepLIFT attributions over the preceding layer. We therefore use the neuron-wise DeepLIFT implementation from Captum~\cite{captum_2020}.

Figure~\ref{fig:DL} presents our OPTIMUS explanations. The original image is shown on the left, while the heatmap is on the right, ranging from transparent purple to yellow, highlighting the sufficient and minimal concepts required to
sustain the model's prediction.

Figure~\ref{fig:DL-opt-vs-diff} illustrates the difference between the full neuron-layer activation and the OPTIMUS explanation, highlighting unnecessary concepts. Depending on the concepts captured by each neuron, non-OPTIMUS concepts may partially overlap with the selected ones. The left panel shows the OPTIMUS explanation as described above; the right panel displays the difference between all concept activations and the OPTIMUS subset, highlighted in mauve.
Finally, Figure~\ref{fig:DL-diff} provides additional visualizations of unnecessary concepts, as defined above.

\begin{figure}
    \centering
    \includegraphics[width=0.18\linewidth]{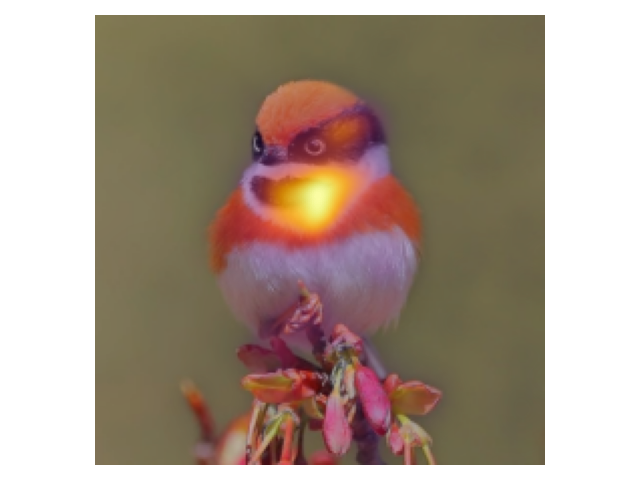}\hspace{-0.5cm}
    \includegraphics[width=0.18\linewidth]{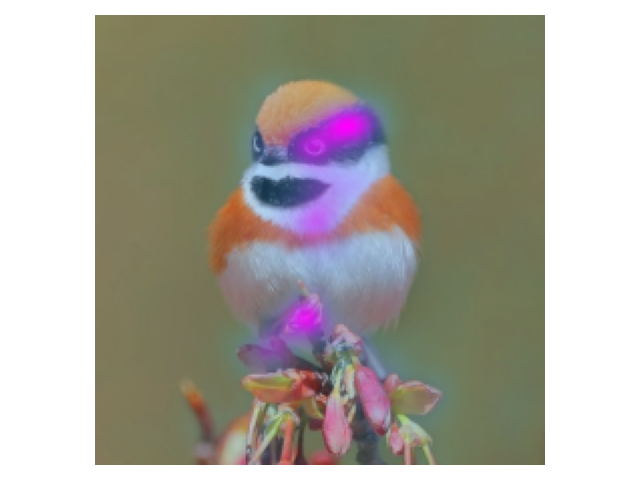}\hspace{-0.3cm}
    \includegraphics[width=0.18\linewidth]{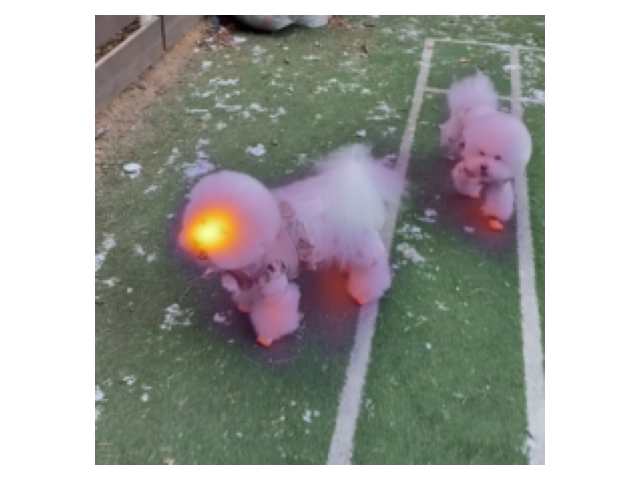}\hspace{-0.5cm}
    \includegraphics[width=0.18\linewidth]{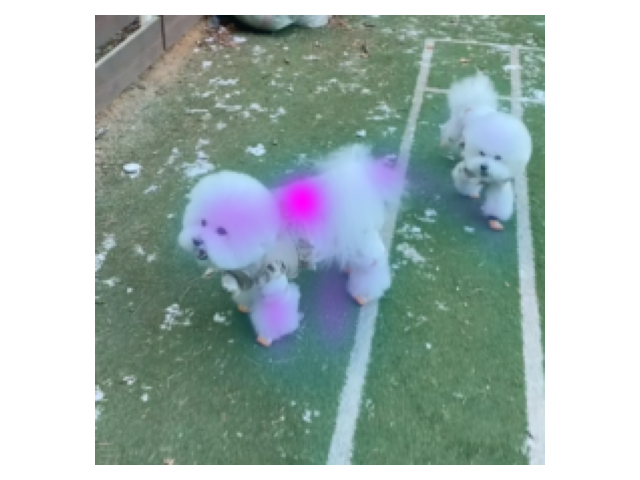}\hspace{-0.3cm}
    \includegraphics[width=0.18\linewidth]{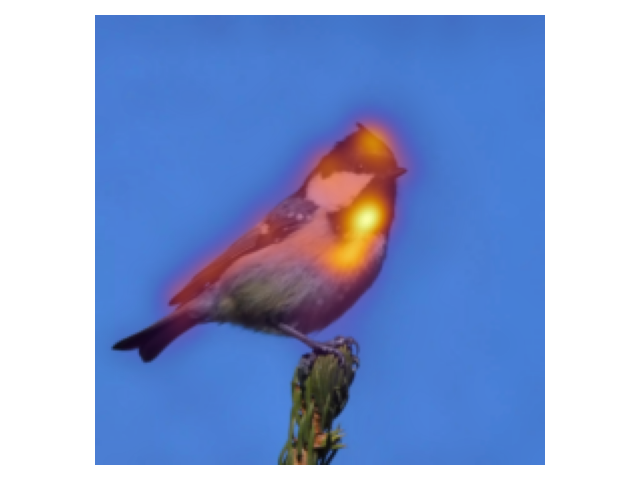}\hspace{-0.5cm}
    \includegraphics[width=0.18\linewidth]{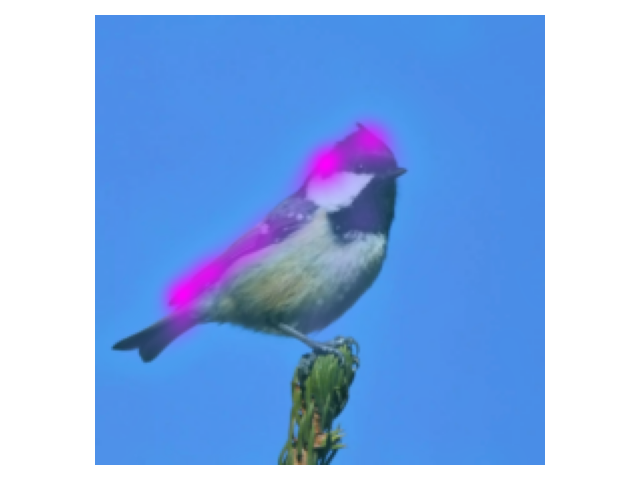}
    \caption{On the left: DeepLIFT-OPTIMUS. On the right: the difference between full concepts highlights and OPTIMUS (positive difference in mauve), highlighting unnecessary concepts.}
    \label{fig:DL-opt-vs-diff}
\end{figure}

\subsection{Analytic evaluation of OPTIMUS explanations}
To complement the qualitative visualizations, we evaluate the computation of OPTIMUS explanations in terms of PI size and runtime. We compare the two proposed greedy rules, \textsc{LowGain} and \textsc{MaxMin}, against a \textsc{Unordered} baseline, which removes features according to their original index order and does not use any margin-based criterion. Detailed results are reported in Appendix~\ref{app:analytic-exp}.

Overall, \textsc{LowGain} provides the best trade-off between compactness (length) and efficiency: it typically yields PI sizes close to \textsc{MaxMin}, while being substantially faster in high-dimensional settings. 
This makes it a practical heuristic for repeated PI computations, including enumeration-oriented analyses. 
The \textsc{Unordered} baseline is often fastest but produces larger PIs, whereas \textsc{MaxMin} remains a useful compactness-oriented reference for optimization-based searches. The artificial-data results further characterize how PI size and runtime scale with the number of variables and classes, providing a benchmark for future comparisons.

\section{Discussion}\label{sec:discussion}

OPTIMUS-Prime remains partially \emph{model-specific}, as a few assumptions must hold. The first constraint is the boundedness of the concept space at the targeted layer. As discussed in Section~\ref{sec:prime}, this requirement can be relaxed at the cost of trading formal guarantees for PAC statistical ones. The second, non-removable assumption is the linearity of the targeted layer. While final layers are commonly linear in practice, this prevents OPTIMUS-Prime from being applicable to arbitrary \emph{black-box} models. Additional minor constraints arise from the use of DeepLIFT~\cite{shrikumar2017} and Integrated Gradients~\cite{sundararajan2017}, whose applicability similarly depends on the underlying architecture. Furthermore, recent work has raised concerns about the reliability of gradient-based attribution methods, highlighting sensitivity to baseline choice and input perturbations~\cite{adebayo2018}, limitations that our formal PI framework partially addresses by grounding explanations in provable sufficiency rather than gradient sensitivity.

A recurring observation from our experiments is that multiple neurons can encode the same concept~\cite{li2015}, a phenomenon partly attributable to the simplicity of the classification tasks considered. While this redundancy is harmless in practice, one may nonetheless prefer disentangled concept representations. This can typically be addressed at inference time through a variety of existing techniques~\cite{kornblith2019, elhage2022, koh_concept_2020}, which we leave as a natural extension of the present framework.

Finally, we reflect on the semantic meaning of latent concepts, which can sometimes remain too abstract to be directly interpretable by humans. This is precisely why we advocate for input-space heatmaps as the primary medium for communicating explanations. Prior work has shown that neurons can be inherently interpretable across a range of architectures~\cite{karpathy2015, bau_network_2017, bau2020}. For instance, word embedding concepts appear to encode semantic directions~\cite{mikolov2013}, while superposition phenomena have been documented in more recent settings~\cite{elhage2022, cheung2019}. We deliberately leave these challenges to the practitioner, in keeping with our goal of designing a framework that is as \emph{model-agnostic} as possible, and that can be naturally composed with existing interpretability tools to address architecture-specific constraints.

\begin{figure}
    \centering
    \includegraphics[width=0.18\linewidth]{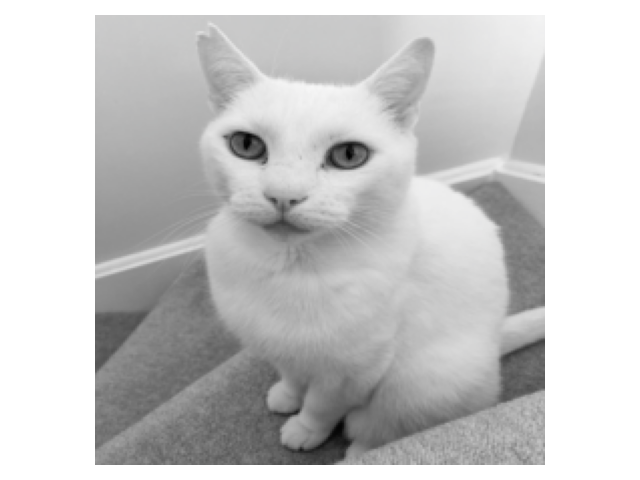}\hspace{-0.5cm}
    \includegraphics[width=0.18\linewidth]{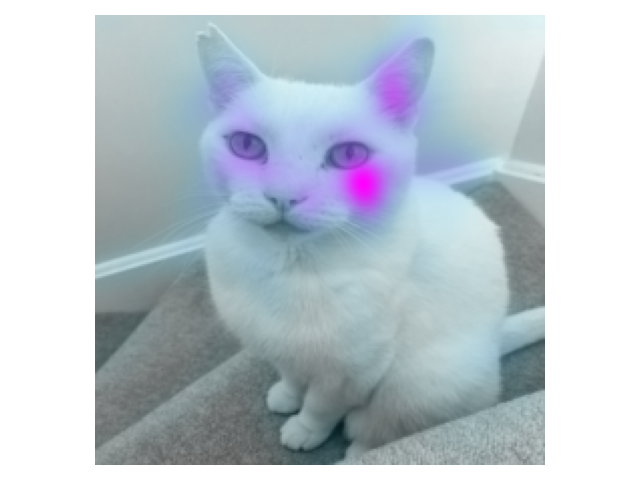}\hspace{-0.3cm}
    \includegraphics[width=0.18\linewidth]{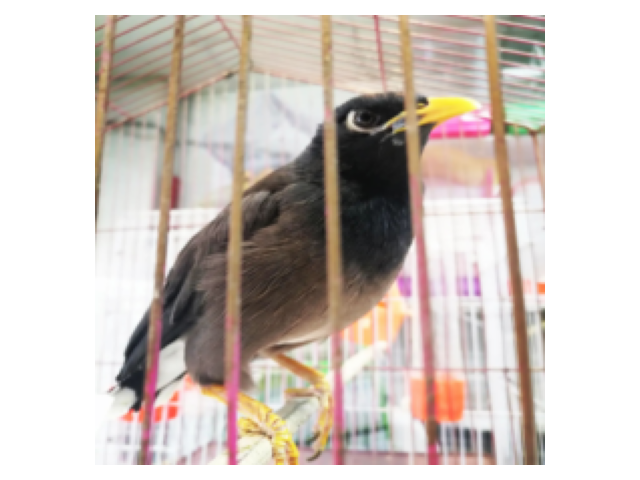}\hspace{-0.5cm}
    \includegraphics[width=0.18\linewidth]{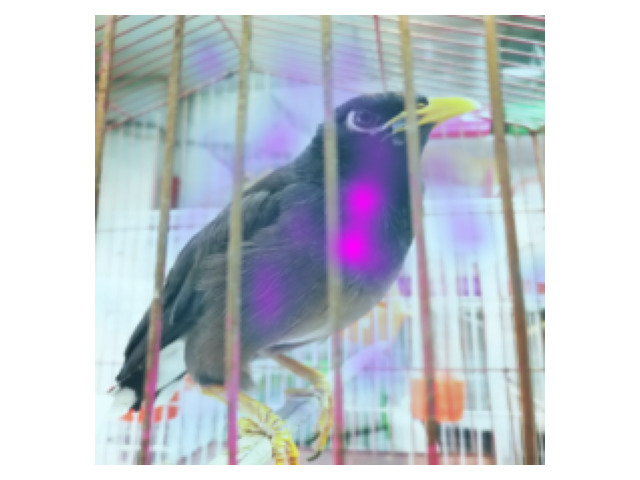}\hspace{-0.3cm}
    \includegraphics[width=0.18\linewidth]{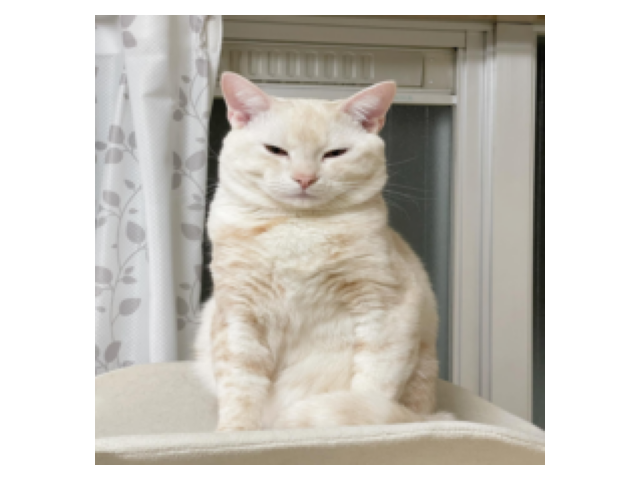}\hspace{-0.5cm}
    \includegraphics[width=0.18\linewidth]{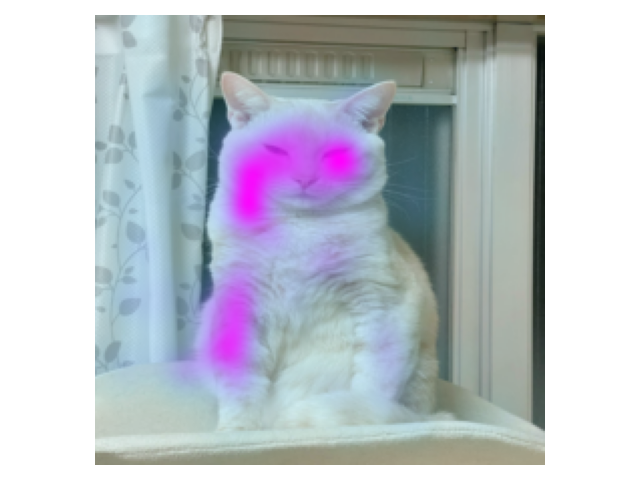}\\
    \includegraphics[width=0.18\linewidth]{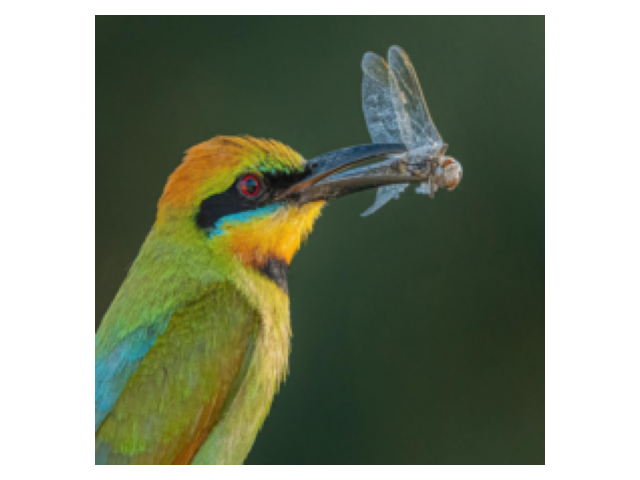}\hspace{-0.5cm}
    \includegraphics[width=0.18\linewidth]{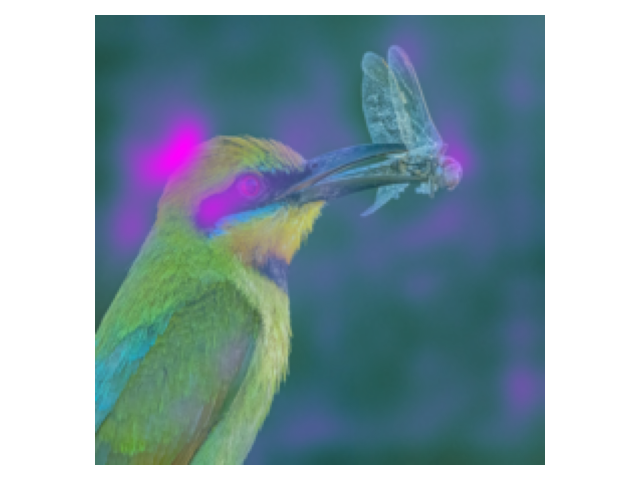}\hspace{-0.3cm}
    \includegraphics[width=0.18\linewidth]{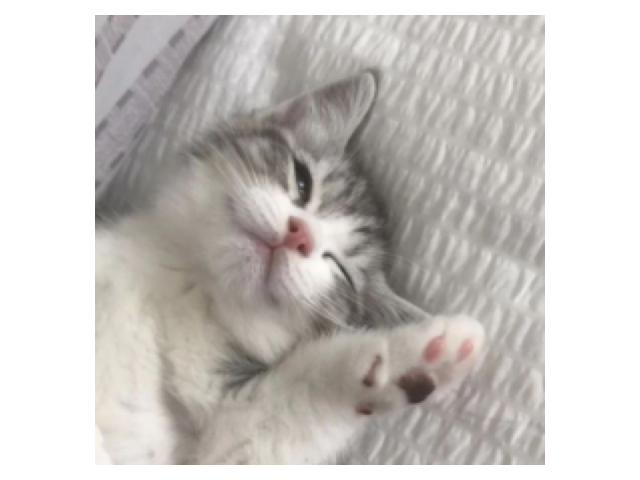}\hspace{-0.5cm}
    \includegraphics[width=0.18\linewidth]{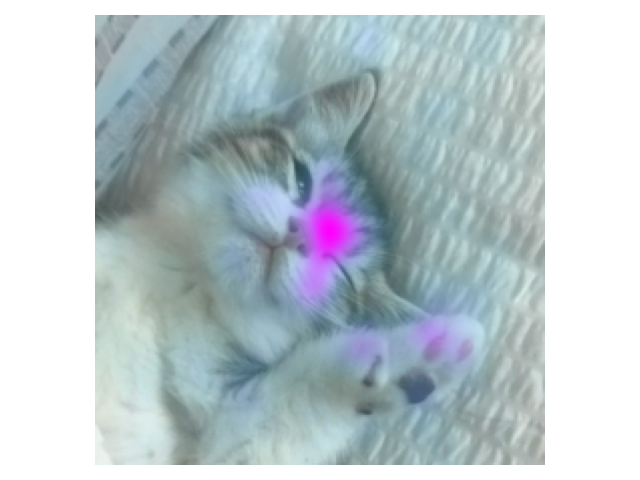}\hspace{-0.3cm}
    \includegraphics[width=0.18\linewidth]{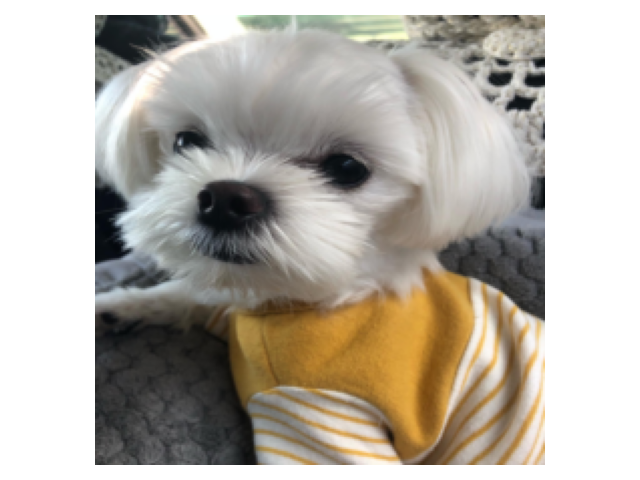}\hspace{-0.5cm}
    \includegraphics[width=0.18\linewidth]{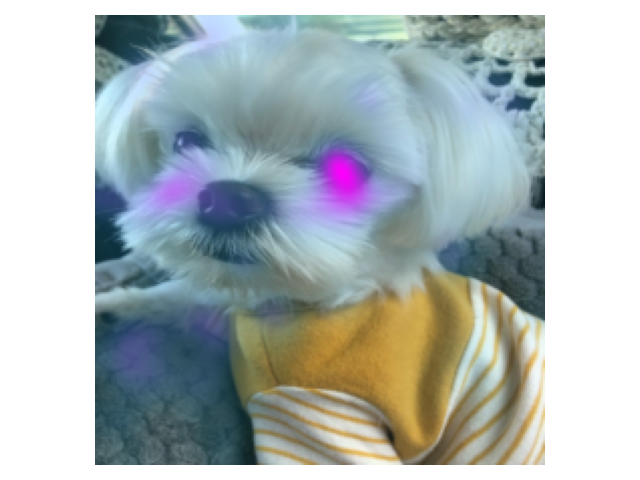}
    \caption{Difference between DeepLIFT and DeepLIFT-OPTIMUS: unnecessary concepts.}
    \label{fig:DL-diff}
\end{figure}

\section{Conclusion}\label{sec:conclusion}

In this paper, we present OPTIMUS-Prime, a new framework for deep visual explanations introducing OPTIMUS explanations. These explanations are grounded in theoretical derivations that provide strong guarantees while maintaining high interpretability: the highlighted concepts are both sufficient and minimal to sustain the model's prediction.

Although OPTIMUS-Prime is broadly and almost transparently applicable to many deep architectures, the framework remains subject to a few assumptions. Notably, the targeted layer must be linear, and the preceding layer's output must be bounded to preserve our theoretical guarantees. In practice, the semantic quality of the explanations further depends on the information captured within the targeted neurons, which may be entangled or superposed. We therefore leave users free to design alternative architectures that best suit their needs.
A natural and promising extension of this work would be the application of OPTIMUS-Prime to recent transformer-based architectures, which we leave for future work.

\bibliographystyle{plain} 
\bibliography{refs.bib}

\newpage
\appendix
\setcounter{proposition}{0}

\section{Notations, Acronyms and Datasets}\label{appendix:Notations}

\begin{table}[h]
\centering
\begin{tabular}{ll}
    \hline
        \textbf{Notations/Acronyms} & \textbf{Description} \\
    \hline
        $\mathcal{X}$ & Input space $\mathcal{X} \subseteq \mathbb{R}^P$ \\
        $P$ & Input dimension \\
        $\mathcal{Y}$ & Label space $\{1, \dots, K\}$ \\
        $(x_n, y_n)$ & i.i.d.\ draw from the joint distribution $P_{X,Y}$ \\
        $\mathcal{D}$ & Dataset $\{(x_n, y_n)\}_{n=1}^N$ \\
        $N$ & Number of training instances \\
    \hline
        $h$ & Neural network classifier $h : \mathcal{X} \to \mathcal{Y}$ \\
        $L$ & Number of layers in the network \\
        $K$ & Number of classes \\
        $\hat{y}(x)$ & Predicted class for input $x$ \\
        $z^{\ell}(x)$ & Output of the $\ell$-th hidden layer, $z^{\ell} \in \mathbb{R}^{r^\ell}$ \\
        $r^\ell$ & Number of neurons at layer $\ell$ \\
    \hline
        $\circ^\ell$ & A concept index in $\{1, \dots, r^\ell\}$ at layer $\ell$ \\
        $\mathcal{O}^{\ell}$ & OPTIMUS prime implicant (set of concepts) at layer $\ell$ \\
        $\mathbf{m}$ & OPTIMUS explanation heatmap\\
    \hline
        $f$ & Linear multiclass classifier function $f : \mathcal{Z} \to \mathbb{R}^K$ \\
        $x^*$ & A given instance with predicted class $c$ \\
        $I$ & Set of all feature indices $\{1, \dots, r\}$ \\
        $S$ & Candidate subset of features $S \subseteq I$ \\
        $\Gamma_k(S)$ & Worst-case dominance margin for class $k$ given fixed features $S$ \\
        $g_q^{(k)}$ & Margin gain of feature $q$ w.r.t.\ rival class $k$ \\
    \hline
        $x'$ & Baseline input (uniformly black image) \\
        $\gamma(\sigma)$ & Linear path between baseline $x'$ and input $x$ \\
        $\mathrm{IG}_i(x)$ & Integrated Gradients attribution at pixel $i$ \\
        $\widehat{\mathrm{IG}}_i(x)$ & Empirical approximation of $\mathrm{IG}_i(x)$ with $M$ steps \\
        $M$ & Number of interpolation steps for IG (typically in $[50, 300]$) \\
        $\mathrm{IG}_i^{\mathrm{OPTIMUS}}(x)$ & OPTIMUS-restricted Integrated Gradients at pixel $i$ \\
        $\mathrm{DeepLIFT}_i$ & DeepLIFT attribution score at pixel $i$ \\
        $\mathrm{DeepLIFT}_i^{\mathrm{OPTIMUS}}$ & OPTIMUS-restricted DeepLIFT attribution at pixel $i$ \\
        $w^L_{\hat{y},k}$ & Weight of the last linear layer for predicted class $\hat{y}$ at neuron $k$ \\
    \hline\\
\end{tabular}
\caption{Notations and Acronyms}\label{tab:Notations and Acronyms}
\end{table}

\section{Proof for Proposition~\ref{prop:direct_search_pi}}\label{appendix:direct_search_pi}
\begin{proposition}
Assume that $x^*$ is classified as class $c$. Starting from $S=I$, by repeatedly removing any feature $q\in S$ such that \(\forall k\neq c \quad \Gamma_k(S\setminus\{q\})>0\), the resulting set $S$ is a multiclass PI when no further removal can guarantee the constraint.
\end{proposition}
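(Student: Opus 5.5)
We argue by an invariant, followed by a termination argument and a check at the fixed point. The invariant is that every set $S$ produced by the procedure is a multiclass implicant for class $c$ at $x^*$, i.e. $\Gamma_k(S)>0$ for all $k\neq c$.

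\emph{Base case.} For $S=I$ every feature is locked to $x^*_q$, so $\Gamma_k(I)=\delta^{(k)}+\sum_{q\in I}x^*_q\Delta^{(k)}_q=f_c(x^*)-f_k(x^*)$. This is strictly positive for every rival $k$, because $x^*$ is classified as $c$ with strict dominance.

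\emph{Inductive step.} A feature $q$ is removed from the current $S$ only when $\Gamma_k(S\setminus\{q\})>0$ for all $k\neq c$. The next set is therefore again an implicant, by the equivalence stated in the definition of multiclass implicant. That equivalence holds because the inner minimisation $\min_{v\in[0,1]}v\,\Delta^{(k)}_q$ is attained coordinatewise. A linear margin over a box is minimised at a vertex, independently in each free coordinate, so $\Gamma_k(S)$ is exactly the worst-case margin over all $x$ agreeing with $x^*$ on $S$.

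\emph{Termination.} Each removal strictly decreases $|S|$, and $|S|\le r$, so the procedure stops after at most $r$ removals. It stops at a set $S$ for which no feature can be removed. By the invariant this $S$ is an implicant, which is the sufficiency part of being a PI.

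\emph{Minimality at the fixed point.} The stopping condition says that for every $q\in S$ the removal test fails. Negating ``$\forall k\neq c,\ \Gamma_k(S\setminus\{q\})>0$'' gives some $k\neq c$ with $\Gamma_k(S\setminus\{q\})\le 0$. In particular this delivers the existential condition in the definition of multiclass PI, as long as $S\neq\emptyset$. It also gives the stronger, intended subset-minimality: no single feature can be dropped while keeping the guarantee.

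To pass from ``no single feature can be dropped'' to ``no proper subset is an implicant'', I would use monotonicity. Since $g^{(k)}_q=x^*_q\Delta^{(k)}_q-\min_{v\in[0,1]}v\,\Delta^{(k)}_q\ge 0$, each $\Gamma_k(S)=a_k+\sum_{q\in S}g^{(k)}_q$ is nondecreasing under inclusion. Any proper subset $T\subsetneq S$ lies inside some $S\setminus\{q\}$, hence $\Gamma_k(T)\le\Gamma_k(S\setminus\{q\})\le 0$ for the corresponding $k$.

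\emph{Edge cases.} I expect these to be the only delicate point: matching the paper's literal definition, including the footnote convention.
\begin{itemize}
\item If $S$ is a singleton, it is prime by convention.
\item If the procedure removes everything, so that $S=\emptyset$, then $\emptyset$ is an implicant, i.e. $a_k>0$ for all $k$. This case can be handled separately: the empty set is trivially subset-minimal, and one notes that the existential clause is vacuous there, so the intended reading is subset-minimality.
\end{itemize}
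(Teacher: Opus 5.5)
Your proof is correct and follows the same route as the paper's: the implicant invariant starting from $\Gamma_k(I)=f_c(x^*)-f_k(x^*)>0$, then reading minimality off the stopping condition. Your monotonicity step via $g^{(k)}_q\ge 0$ also justifies the passage from ``no single feature can be dropped'' to full subset-minimality, which the paper asserts without spelling out. One small correction on the edge case: when $S=\emptyset$ the clause $\exists q\in S$ is false rather than vacuous, so the literal definition excludes $\emptyset$ and that case genuinely needs the universal (subset-minimality) reading you adopt.
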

\begin{proof}
Since $\Gamma_k(I)>0$ for all $k\neq c$, the initial set $I$ is a multiclass implicant. At each step, a feature is removed only if the remaining set still satisfies \(\Gamma_k(S)>0,\ \forall k\neq c\). Thus, the implicant property is preserved throughout the procedure. At termination, no feature $q\in S$ can be removed while preserving all dominance constraints. Hence, for every $q\in S$, there exists some rival
class $k\neq c$ such that
\(
\Gamma_k(S\setminus\{q\})\leq 0.
\)
Therefore, no subset obtained by deleting one more feature remains an implicant, which means that $S$ is subset-minimal, and hence a multiclass PI.
\end{proof}

\section{Algorithm for finding a multiclass PI}\label{alg:find_multiclass_pi}
\begin{algorithm}[H]
\caption{Finding one multiclass PI.}
\label{alg:ordered-greedy-pruning-pi}
\KwIn{Feature set \(I\), feature dimension $r$, predicted class \(c\), and greedy rule \(\mathcal G\).}
\KwOut{A multiclass PI \(S\subseteq I\).}

Initialize \(S \leftarrow I\)

Sort the features in ascending order according to \(\mathcal G\):
\(
(q_{(1)},\ldots,q_{(r)}) \leftarrow \operatorname{Order}_{\mathcal G}(I)
\)

\For{\(i=1,\ldots, r\)}{
    \If{\(\;q_{(i)}\in E\;\) and \(\;\Gamma_k(S\setminus\{q_{(i)}\})>0\), $\; \forall k\neq c\;$}{
        \(S \leftarrow S\setminus\{q_{(i)}\}\)
    }
}

\Return{\(S\)}
\end{algorithm}

\section{Additional analytic evaluation}
\label{app:analytic-exp}

Tables~\ref{tab:pi-real} report the average size and computation time of the PIs obtained by each greedy rule. Lower PI size indicates a more compact explanation, while lower runtime indicates a more efficient search procedure. Bold values indicate the shortest PI size or the quickest computation time within each dataset. The results show that the same trend persists in this regime: \textsc{MaxMin} gives the smallest PIs, \textsc{Unordered} is the fastest, and \textsc{LowGain} achieves an intermediate trade-off between PI length and computation time.

For the image embeddings, the number of variables is much larger, as each instance is represented by a 512-dimensional feature vector extracted from the image model. This setting therefore provides a higher-dimensional evaluation scenario, closer to the latent representations used in deep vision models. 

\begin{table}
\centering
\small
\caption{Analytic comparison of multiclass PI search on real data and image embeddings.}
\label{tab:pi-real}
\begin{adjustbox}{max width=\textwidth}
\begin{tabular}{lrrlrr}
\toprule
Dataset & Classes & Variables & Rules & PI size & Time (ms) \\
\midrule
Iris & 3 & 4 & \textsc{MaxMin} 
& \(\mathbf{3.333 \pm 0.516}\) & 0.046 \\
Iris & 3 & 4 & \textsc{LowGain} 
& \(\mathbf{3.333 \pm 0.516}\) & 0.055 \\
Iris & 3 & 4 & \textsc{Unordered} 
& \(\mathbf{3.333 \pm 0.516}\) & \(\mathbf{0.042}\) \\
\midrule
Wine & 3 & 13 & \textsc{MaxMin} 
& \(\mathbf{8.315 \pm 1.537}\) & 0.220 \\
Wine & 3 & 13 & \textsc{LowGain} 
& \(\mathbf{8.315 \pm 1.537}\) & 0.094 \\
Wine & 3 & 13 & \textsc{Unordered} 
& \(9.796 \pm 1.192\) & \(\mathbf{0.081}\) \\
\midrule
Digits & 10 & 64 & \textsc{MaxMin} 
& \(\mathbf{29.656 \pm 4.337}\) & 4.864 \\
Digits & 10 & 64 & \textsc{LowGain} 
& \(30.144 \pm 4.350\) & 0.371 \\
Digits & 10 & 64 & \textsc{Unordered} 
& \(37.619 \pm 4.258\) & \(\mathbf{0.290}\) \\
\midrule
Image embeddings & 3 & 512 & \textsc{MaxMin} 
& \(\mathbf{353.814 \pm 41.086}\) & 330.997 \\
Image embeddings & 3 & 512 & \textsc{LowGain} 
& \(371.986 \pm 43.110\) & 6.203 \\
Image embeddings & 3 & 512 & \textsc{Unordered} 
& \(434.436 \pm 27.962\) & \(\mathbf{4.972}\) \\
\bottomrule
\end{tabular}
\end{adjustbox}
\end{table}

Figures~\ref{fig:pi-comp-4}--\ref{fig:pi-comp-12} report PI size and computation time as functions of the number of variables on artificially generated data. This setting allows us to vary both the feature dimension and the number of classes. For each number of classes, the left plot shows the average PI size with standard deviation as a shaded region, while the right plot shows the average computation time.

As expected, PI size and runtime generally increase with the number of variables. The \textsc{Unordered} rule is consistently faster, but usually returns larger PIs. In contrast, \textsc{MaxMin} and \textsc{LowGain} produce PI sizes that remain relatively close, while \textsc{MaxMin} becomes substantially more expensive as the feature dimension grows.

We also observe that \textsc{MaxMin} tends to run faster when the number of classes increases, for a fixed number of variables. A possible explanation is that additional classes impose tighter dominance constraints, leaving fewer removable features and making the choice of \(q\) more efficient, although this typically leads to larger PIs.
The raw numerical results are reported in Table~\ref{tab:pi-artificial}.

\begin{figure}[p]
    \centering
    \includegraphics[width=0.8\linewidth]{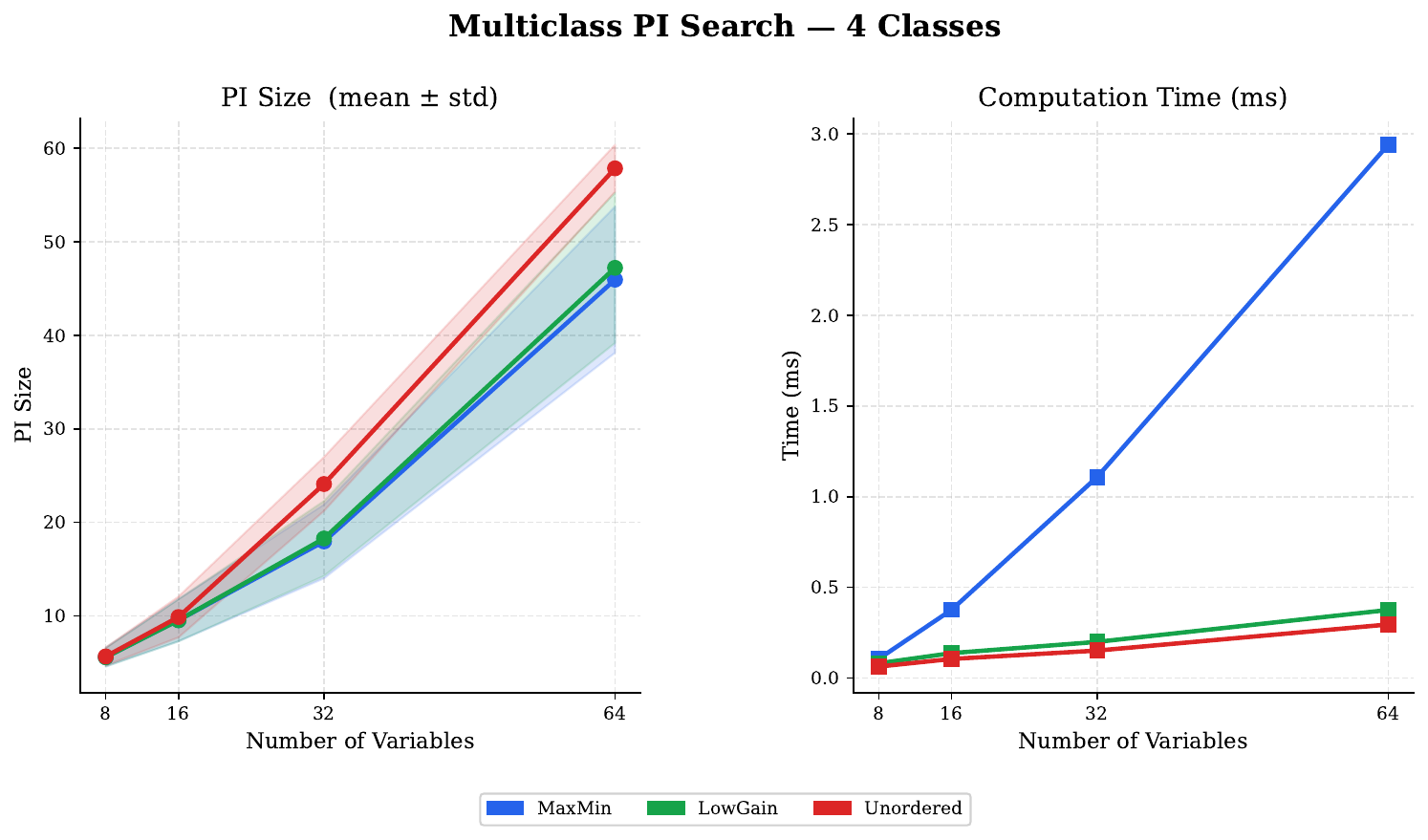}
    \caption{Artificial-data comparison of 4-class PI search \emph{vs.} number of variables.}
    \label{fig:pi-comp-4}
\end{figure}

\begin{figure}[p]
    \centering
    \includegraphics[width=0.8\linewidth]{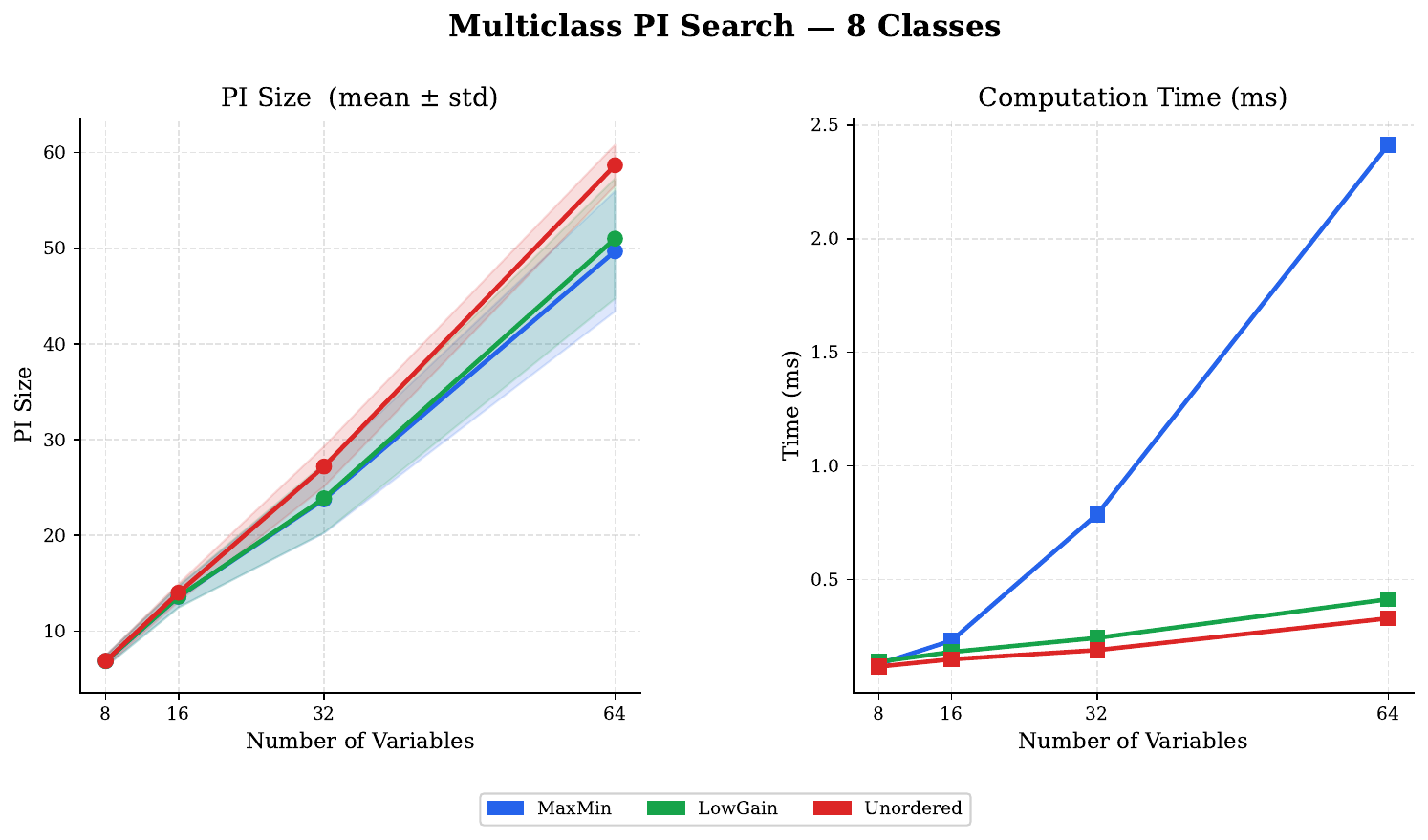}
    \caption{Artificial-data comparison of 8-class PI search \emph{vs.} number of variables.}
    \label{fig:pi-comp-8}
\end{figure}

\begin{figure}[p]
    \centering
    \includegraphics[width=0.8\linewidth]{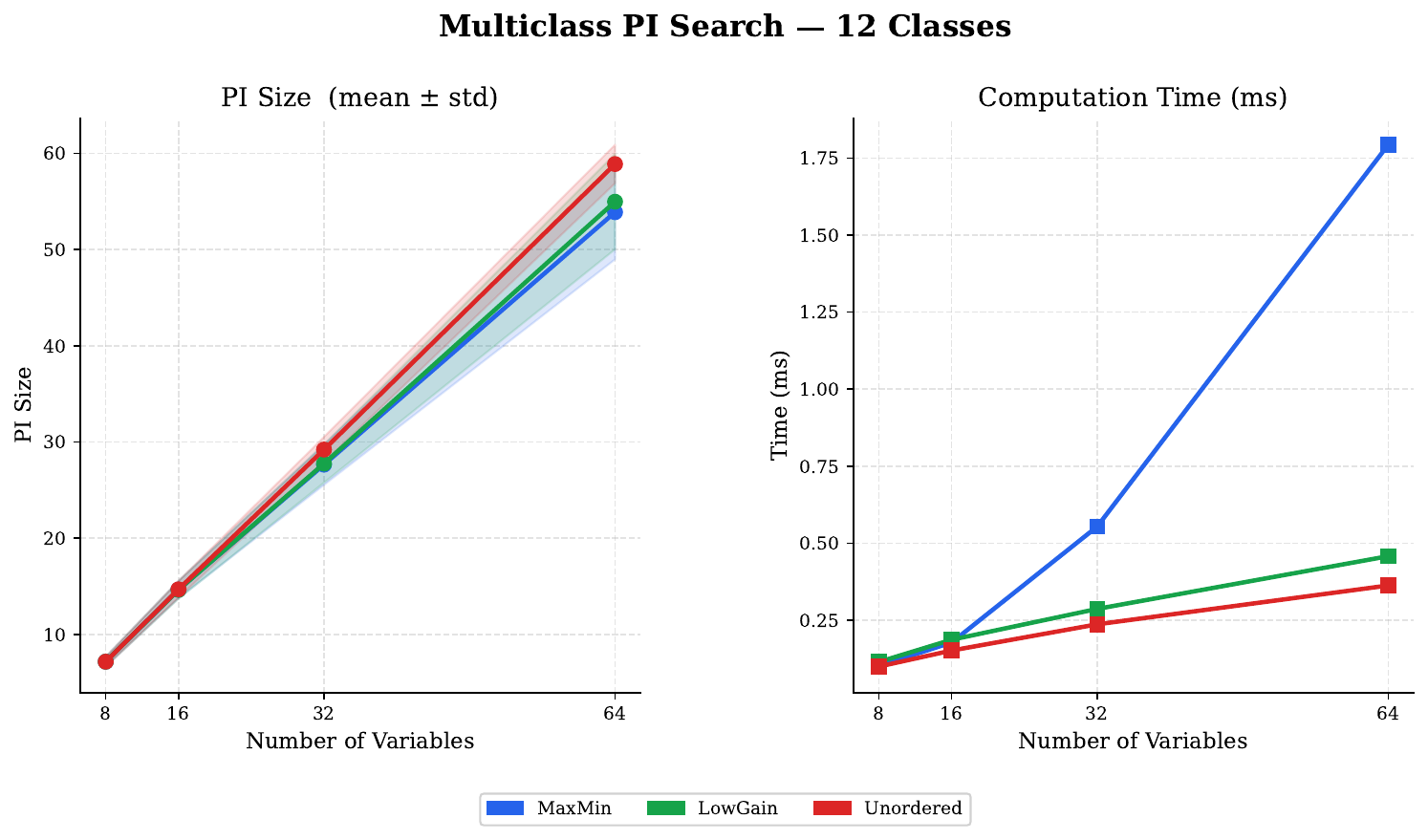}
    \caption{Artificial-data comparison of 12-class PI search \emph{vs.} number of variables.}
    \label{fig:pi-comp-12}
\end{figure}

\begin{table}
\centering
\small
\caption{Artificial-data comparison of multiclass PI search as a function of the number of classes and variables.}
\label{tab:pi-artificial}
\begin{adjustbox}{max width=\textwidth}
\begin{tabular}{rrlrr}
\toprule
Classes & Variables & Rules & PI size & Time (ms) \\
\midrule
4 & 8 & \textsc{MaxMin} 
& \(5.575 \pm 0.997\) & 0.106 \\
4 & 8 & \textsc{LowGain} 
& \(\mathbf{5.567 \pm 0.998}\) & 0.081 \\
4 & 8 & \textsc{Unordered} 
& \(5.650 \pm 0.972\) & \(\mathbf{0.063}\) \\
\midrule
4 & 16 & \textsc{MaxMin} 
& \(9.525 \pm 2.254\) & 0.376 \\
4 & 16 & \textsc{LowGain} 
& \(\mathbf{9.517 \pm 2.258}\) & 0.137 \\
4 & 16 & \textsc{Unordered} 
& \(9.867 \pm 2.183\) & \(\mathbf{0.105}\) \\
\midrule
4 & 32 & \textsc{MaxMin} 
& \(\mathbf{17.950 \pm 3.951}\) & 1.108 \\
4 & 32 & \textsc{LowGain} 
& \(18.267 \pm 4.010\) & 0.199 \\
4 & 32 & \textsc{Unordered} 
& \(24.092 \pm 2.890\) & \(\mathbf{0.151}\) \\
\midrule
4 & 64 & \textsc{MaxMin} 
& \(\mathbf{45.950 \pm 7.822}\) & 2.940 \\
4 & 64 & \textsc{LowGain} 
& \(47.217 \pm 8.039\) & 0.375 \\
4 & 64 & \textsc{Unordered} 
& \(57.858 \pm 2.521\) & \(\mathbf{0.295}\) \\
\midrule
8 & 8 & \textsc{MaxMin} 
& \(\mathbf{6.875 \pm 0.571}\) & 0.128 \\
8 & 8 & \textsc{LowGain} 
& \(\mathbf{6.875 \pm 0.571}\) & 0.138 \\
8 & 8 & \textsc{Unordered} 
& \(6.900 \pm 0.554\) & \(\mathbf{0.117}\) \\
\midrule
8 & 16 & \textsc{MaxMin} 
& \(\mathbf{13.592 \pm 1.107}\) & 0.231 \\
8 & 16 & \textsc{LowGain} 
& \(13.600 \pm 1.150\) & 0.181 \\
8 & 16 & \textsc{Unordered} 
& \(14.017 \pm 0.931\) & \(\mathbf{0.149}\) \\
\midrule
8 & 32 & \textsc{MaxMin} 
& \(\mathbf{23.767 \pm 3.528}\) & 0.787 \\
8 & 32 & \textsc{LowGain} 
& \(23.883 \pm 3.601\) & 0.243 \\
8 & 32 & \textsc{Unordered} 
& \(27.200 \pm 2.096\) & \(\mathbf{0.189}\) \\
\midrule
8 & 64 & \textsc{MaxMin} 
& \(\mathbf{49.683 \pm 6.284}\) & 2.413 \\
8 & 64 & \textsc{LowGain} 
& \(51.008 \pm 6.275\) & 0.413 \\
8 & 64 & \textsc{Unordered} 
& \(58.675 \pm 2.134\) & \(\mathbf{0.329}\) \\
\midrule
12 & 8 & \textsc{MaxMin} 
& \(\mathbf{7.167 \pm 0.506}\) & 0.102 \\
12 & 8 & \textsc{LowGain} 
& \(\mathbf{7.167 \pm 0.506}\) & 0.115 \\
12 & 8 & \textsc{Unordered} 
& \(\mathbf{7.167 \pm 0.506}\) & \(\mathbf{0.100}\) \\
\midrule
12 & 16 & \textsc{MaxMin} 
& \(14.650 \pm 0.900\) & 0.178 \\
12 & 16 & \textsc{LowGain} 
& \(\mathbf{14.633 \pm 0.921}\) & 0.187 \\
12 & 16 & \textsc{Unordered} 
& \(14.692 \pm 0.874\) & \(\mathbf{0.152}\) \\
\midrule
12 & 32 & \textsc{MaxMin} 
& \(\mathbf{27.650 \pm 2.104}\) & 0.554 \\
12 & 32 & \textsc{LowGain} 
& \(27.758 \pm 1.983\) & 0.287 \\
12 & 32 & \textsc{Unordered} 
& \(29.217 \pm 1.311\) & \(\mathbf{0.237}\) \\
\midrule
12 & 64 & \textsc{MaxMin} 
& \(\mathbf{53.892 \pm 4.905}\) & 1.793 \\
12 & 64 & \textsc{LowGain} 
& \(54.983 \pm 4.956\) & 0.458 \\
12 & 64 & \textsc{Unordered} 
& \(58.900 \pm 2.002\) & \(\mathbf{0.363}\) \\
\bottomrule
\end{tabular}
\end{adjustbox}
\end{table}
\section{Experimental results with Integrated Gradients}

\begin{figure}[H]
    \centering
    \includegraphics[width=0.18\linewidth]{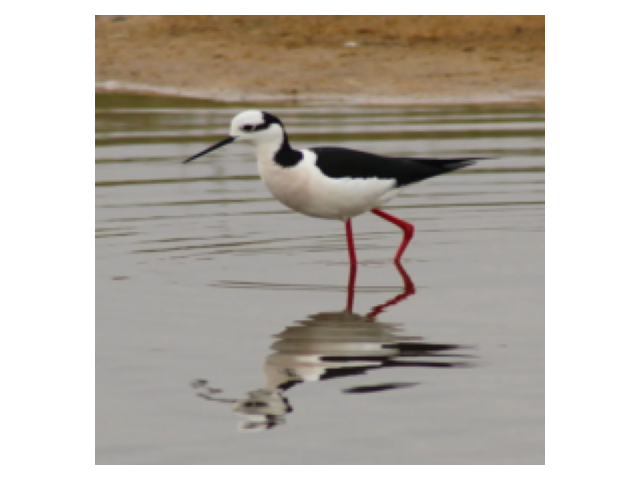}\hspace{-0.5cm}
    \includegraphics[width=0.18\linewidth]{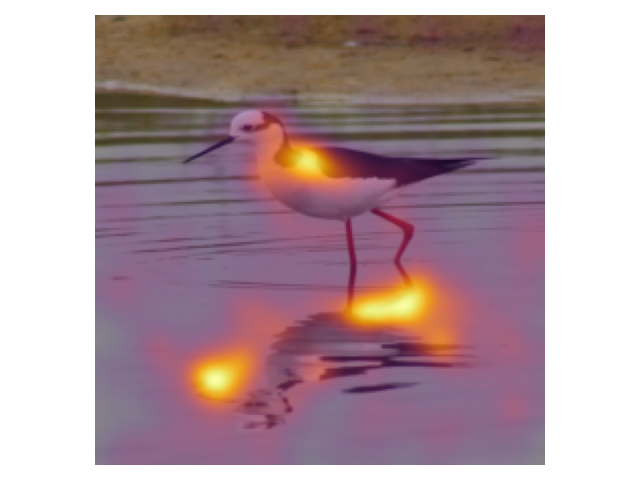}\hspace{-0.3cm}
    \includegraphics[width=0.18\linewidth]{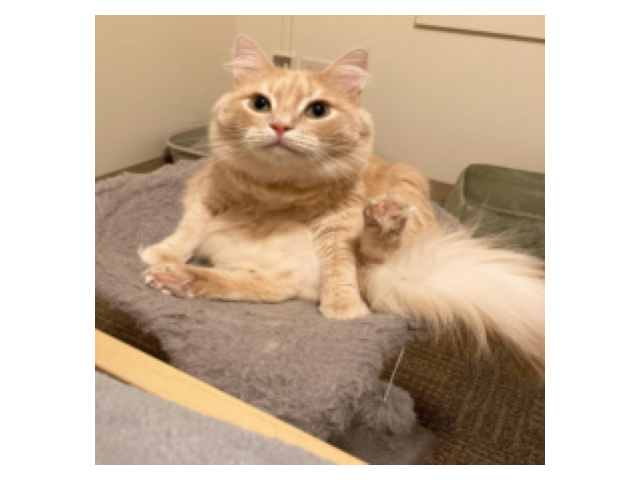}\hspace{-0.5cm}
    \includegraphics[width=0.18\linewidth]{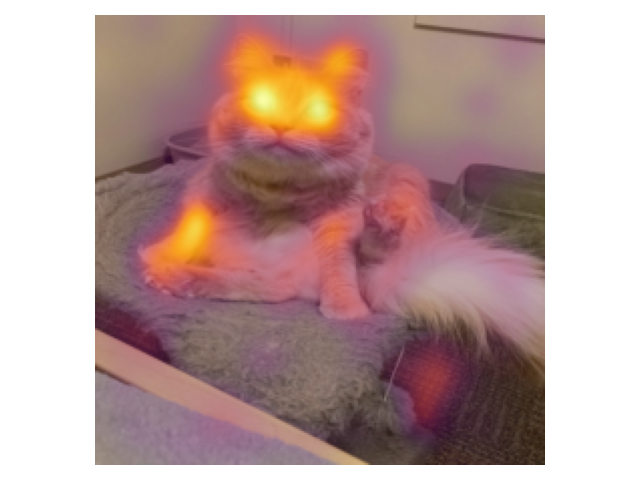}\hspace{-0.3cm}
    \includegraphics[width=0.18\linewidth]{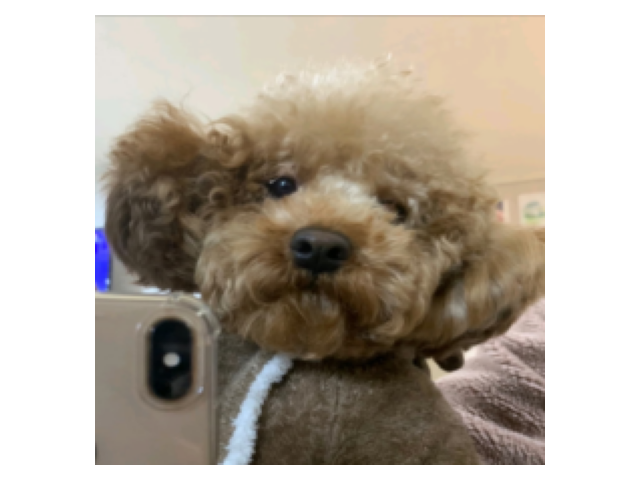}\hspace{-0.5cm}
    \includegraphics[width=0.18\linewidth]{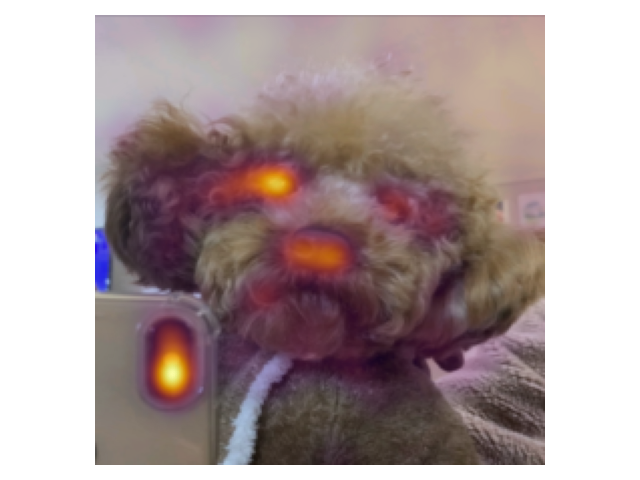}\\
    \includegraphics[width=0.18\linewidth]{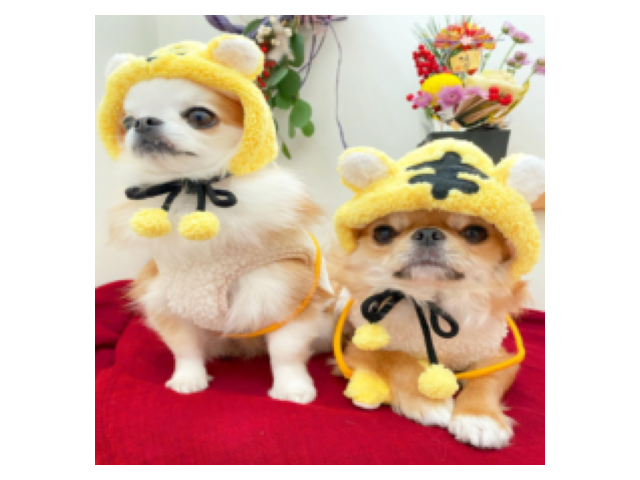}\hspace{-0.5cm}
    \includegraphics[width=0.18\linewidth]{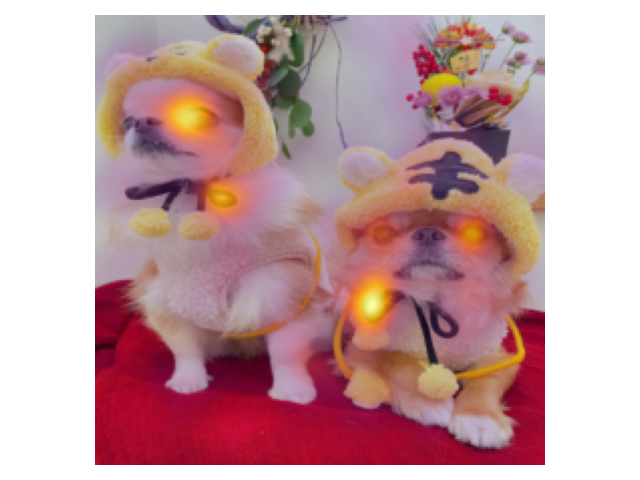}\hspace{-0.3cm}
    \includegraphics[width=0.18\linewidth]{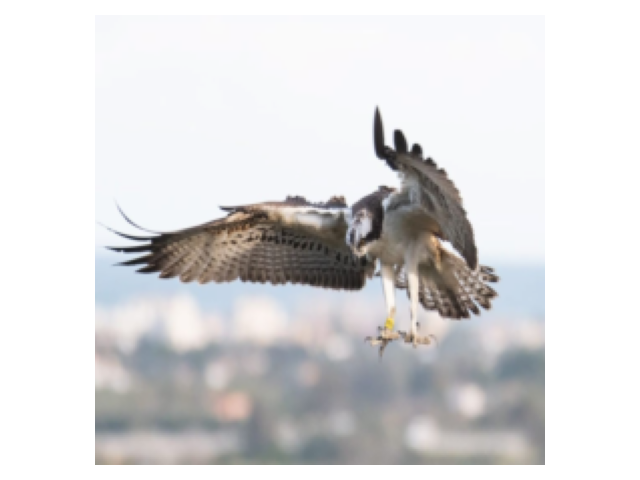}\hspace{-0.5cm}
    \includegraphics[width=0.18\linewidth]{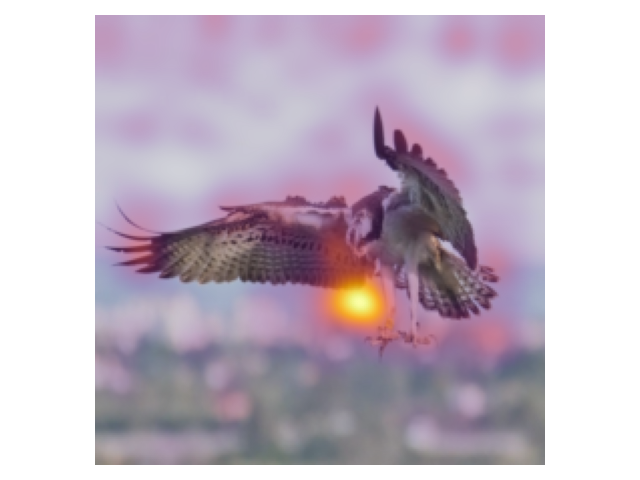}\hspace{-0.3cm}
    \includegraphics[width=0.18\linewidth]{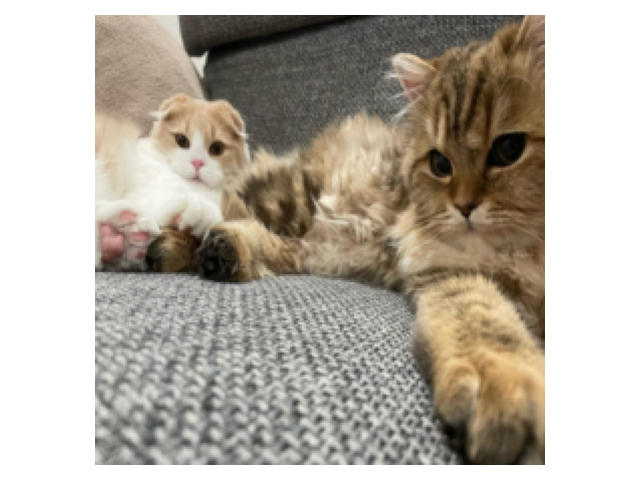}\hspace{-0.5cm}
    \includegraphics[width=0.18\linewidth]{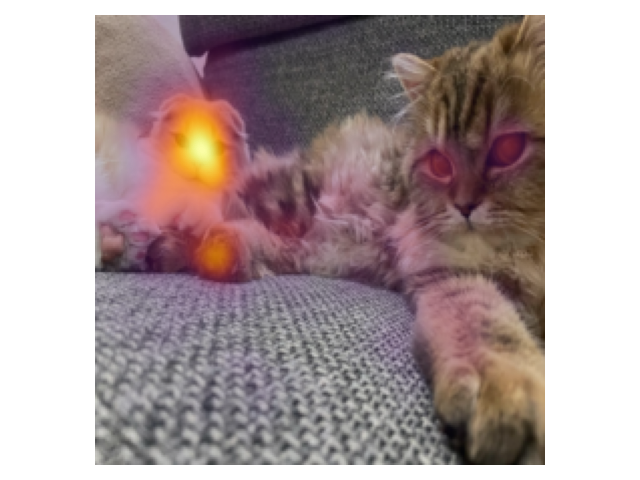}
    \caption{IG-OPTIMUS explanations.}
    \label{fig:IG}
\end{figure}

\begin{figure}[H]
    \centering
    \includegraphics[width=0.18\linewidth]{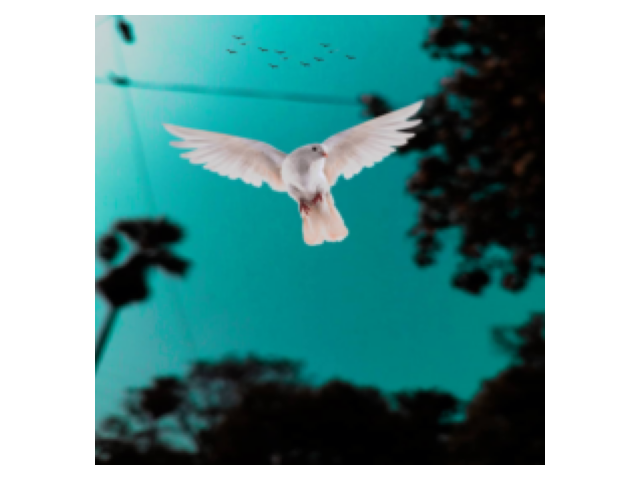}\hspace{-0.5cm}
    \includegraphics[width=0.18\linewidth]{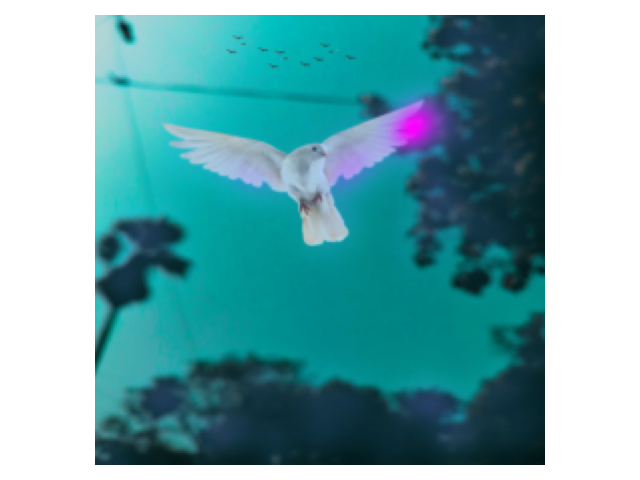}\hspace{-0.3cm}
    \includegraphics[width=0.18\linewidth]{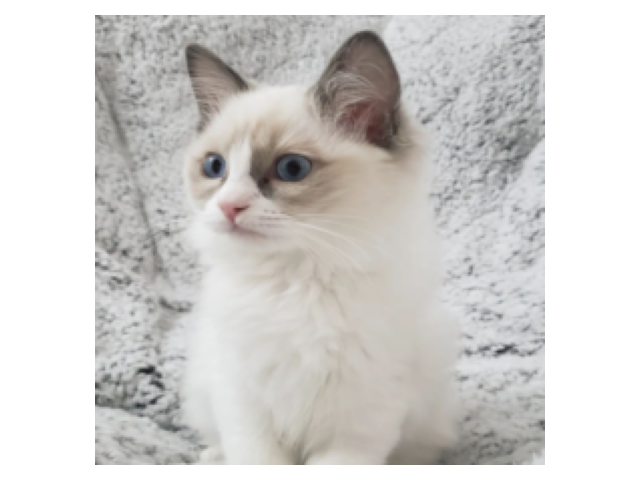}\hspace{-0.5cm}
    \includegraphics[width=0.18\linewidth]{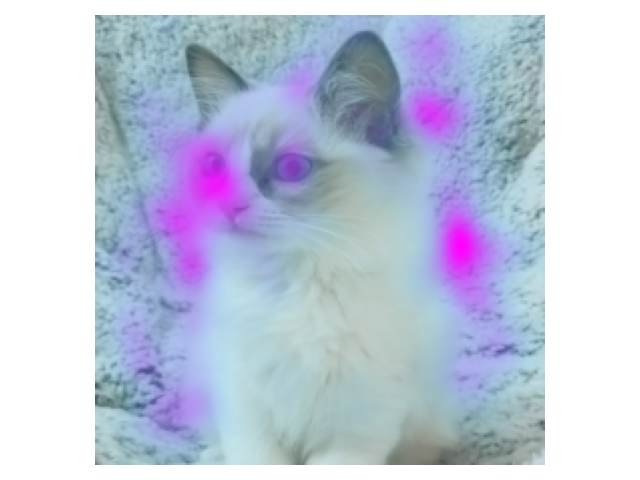}\hspace{-0.3cm}
    \includegraphics[width=0.18\linewidth]{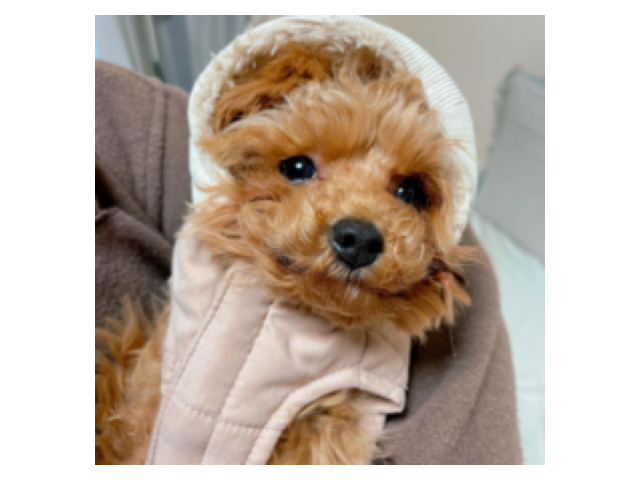}\hspace{-0.5cm}
    \includegraphics[width=0.18\linewidth]{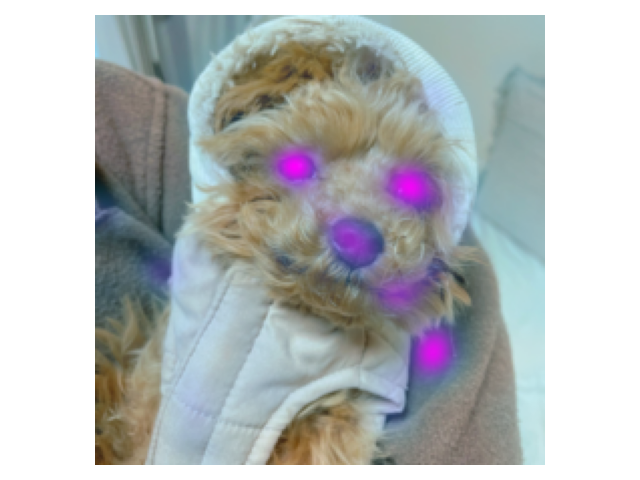}\\
    \includegraphics[width=0.18\linewidth]{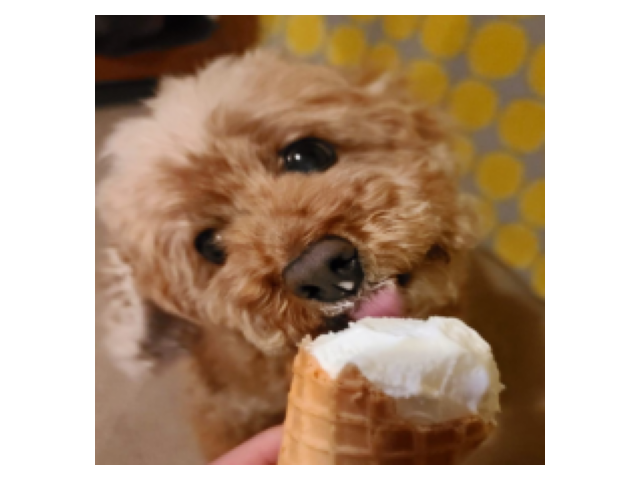}\hspace{-0.5cm}
    \includegraphics[width=0.18\linewidth]{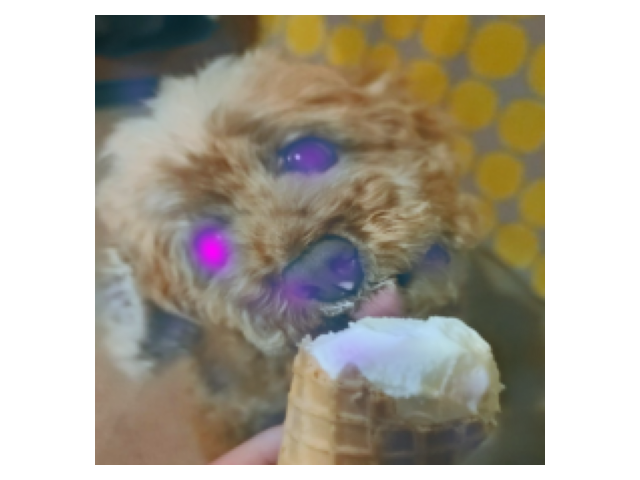}\hspace{-0.3cm}
    \includegraphics[width=0.18\linewidth]{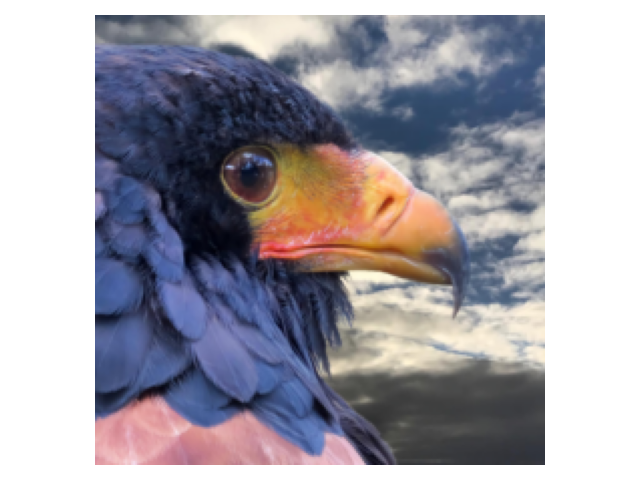}\hspace{-0.5cm}
    \includegraphics[width=0.18\linewidth]{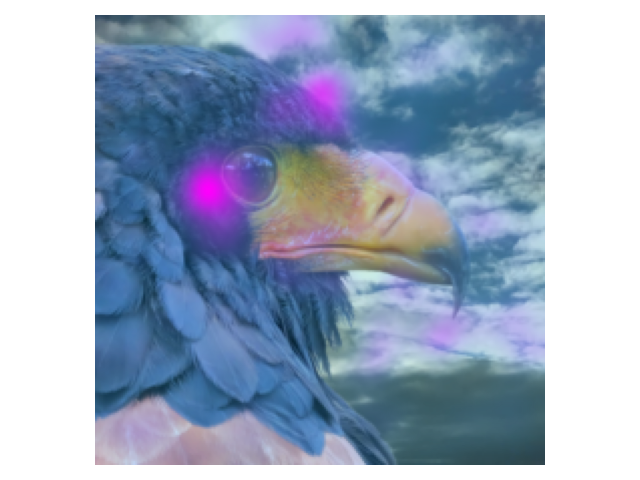}\hspace{-0.3cm}
    \includegraphics[width=0.18\linewidth]{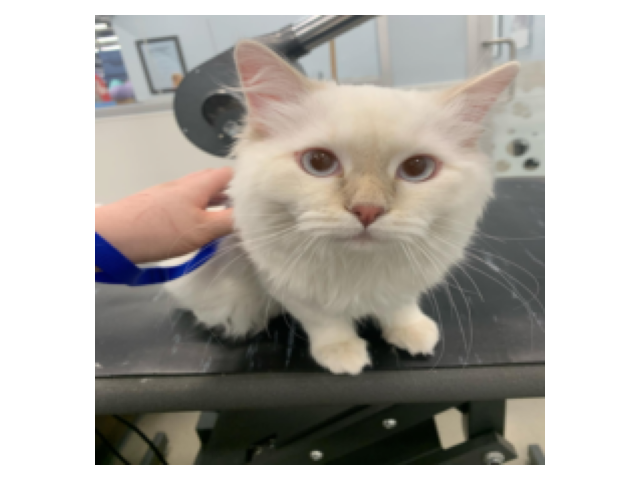}\hspace{-0.5cm}
    \includegraphics[width=0.18\linewidth]{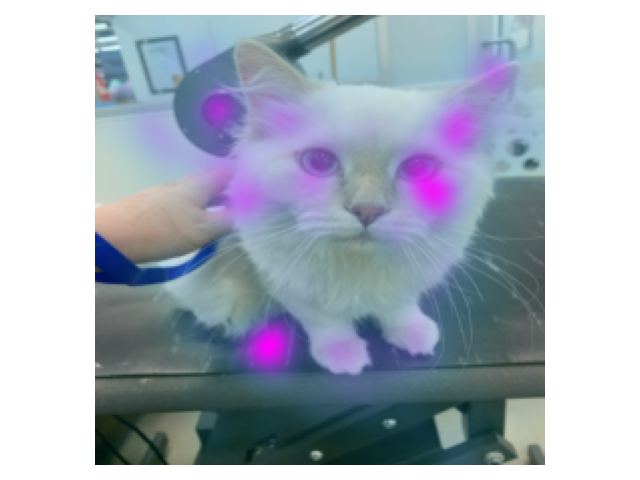}
    \caption{Difference between IG and IG-OPTIMUS: unnecessary concepts.}
    \label{fig:IG-diff}
\end{figure}

\end{document}